\newtheorem{theorem}{Theorem}[section]
\newtheorem{corollary}{Corollary}[theorem]
\newtheorem{lemma}[theorem]{Lemma}
\newtheorem{definition}{Definition}[section]
\DeclareMathOperator*{\argmin}{arg\,min}
\newcommand\T{\rule{0pt}{2.6ex}}       
\renewcommand\expandafter\subsection\expandafter{%
    \expandafter\@fb@secFB\subsection
  }%
\icmltitlerunning{FeTa: A DCA Pruning Algorithm with Generalization Error Guarantees}
\begin{document}

\twocolumn[
\icmltitle{FeTa: A DCA Pruning Algorithm with Generalization Error Guarantees}



\icmlsetsymbol{equal}{*}

\begin{icmlauthorlist}
\icmlauthor{Konstantinos Pitas}{epfl}
\icmlauthor{Mike Davies}{edinburgh}
\icmlauthor{Pierre Vandergheynst}{epfl}
\end{icmlauthorlist}

\icmlaffiliation{epfl}{LTS2 Laboratory, EPFL, Lausanne, Switzerland}
\icmlaffiliation{edinburgh}{IDCOM Laboratory, University of Edinburgh, Edinburgh, UK}

\icmlcorrespondingauthor{Pitas Konstantinos}{konstantinos.pitas@epfl.ch}

\icmlkeywords{Machine Learning, ICML}

\vskip 0.3in
]



\printAffiliationsAndNotice{\icmlEqualContribution} 

\begin{abstract}
Recent DNN pruning algorithms have succeeded in reducing the number of parameters in fully connected layers, often with little or no drop in classification accuracy. However, most of the existing pruning schemes either have to be applied during training or require a costly retraining procedure after pruning to regain classification accuracy. We start by proposing a cheap pruning algorithm for fully connected DNN layers based on difference of convex functions (DC) optimisation, that requires little or no retraining. We then provide a theoretical analysis for the growth in the Generalization Error (GE) of a DNN for the case of bounded perturbations to the hidden layers, of which weight pruning is a special case. Our pruning method is orders of magnitude faster than competing approaches, while our theoretical analysis sheds light to previously observed problems in DNN pruning. Experiments on commnon feedforward neural networks validate our results.  
\end{abstract}

\section{Introduction}
Recently, deep neural networks have achieved state-of-the art results in a number of machine learning tasks \citet{lecun2015deep}. Training such networks is computationally intensive and often requires dedicated and expensive hardware. Furthermore, the resulting networks often require a considerable amount of memory to be stored. Using a Pascal Titan X GPU the popular AlexNet and VGG-16 models require 13 hours and 7 days, respectively, to train, while requiring 200MB and 600MB, respectively, to store. The large memory requirements limit the use of DNNs in embedded systems and portable devices such as smartphones, which are now ubiquitous. 

A number of approaches have been proposed to reduce the DNN size during training time, often with little or no degradation to classification performance. Approaches include introducing bayesian, sparsity-inducing priors \citet{louizos2017bayesian} \citet{blundell2015weight} \citet{molchanov2017variational} and binarization \citet{hou2016loss} \citet{courbariaux2016binarized}.Other methods include the hashing trick used in \citet{chen2015compressing}, tensorisation \citet{novikov2015tensorizing} and efficient matrix factorisations \citet{yang2015deep}.

However, trained DNN models are used by researchers and developers that do not have dedicated hardware to train them, often as general feature extractors for transfer learning. In such settings it is important to introduce a \textit{cheap} compression method, i.e., one that can be implemented as a postprocessing step with little or no retraining. Some first work in this direction has been \citet{kim2015compression} \citet{han2015deep} \citet{han2015learning} although these still require a lengthy retraining procedure. Closer to our approach recently in \citet{aghasi2016net} the authors propose a convexified layerwise pruning algorithm termed Net-Trim. Building upon Net-Trim, the authors in \citet{dong2017learning} propose LOBS, an algorithm for layerwise pruning by loss function approximation. 

Pruning a neural network layer introduces a perturbation to the latent signal representations generated by that layer. As the pertubated signal passes through layers of non-linear projections, the perturbation could become arbitrarily large. DNN robustness to hidden layer perturbations has been investigated for random noise in \citet{raghu2016expressive}. For the case of pruning in \citet{aghasi2016net} and \citet{dong2017learning} the authors conduct a theoretical analysis using the Lipschitz properties of DNNs showing the stability of the latent representations, over the training set, after pruning. The methods employed have connections to recent work \citet{sokolic2017robust} \citet{bartlett2017spectrally} \citet{neyshabur2017pac} that have used the Lipschitz properties to analyze the Generalization Error (GE) of DNNs, a more useful performance measure.

\subsection{Contributions}
In this work we introduce a cheap pruning algorithm for dense layers of DNNs. We also conduct a theoretical analysis of how pruning affects the Generalization Error of the trained classifier. 
\begin{itemize}
\item We show that the sparsity-inducing objective proposed in \citet{aghasi2016net} can be cast as a difference of convex functions problem, that has an efficient solution. For a fully connected layer with input dimension $d_1$, output dimension $d_2$ and $N$ training samples, Net-Trim and LOBS scale like $\mathcal{O}(Nd_1^3)$ and $\mathcal{O}((N+d_2)d_1^2 )$, respectively. Our iterative algorithm scales like $\mathcal{O}(K(N+\frac{Nk}{N+\sqrt{k}}) \text{log}(\frac{1}{\epsilon}) d_1 d_2)$, where $\epsilon$ is the precision of the solution, $k$ is related to the Lipschitz and strong convexity constants, $d_2 \ll d_1$ and $K$ is the outer iteration number. Emprirically, our algorithm is orders of magnitude faster than competing approaches. We also extend our formulation to allow retraining a layer with any convex regulariser. 
\item We build upon the work of \citet{sokolic2017robust} to bound the GE of a DNN for the case of bounded perturbations to the hidden layer weights, of which pruning is a special case. Our theoretical analysis provides a principled way of pruning while managing the GE. In sharp contrast to the analysis of \citet{aghasi2016net} and \citet{dong2017learning} our analysis correctly predicts the previously observed phenomenon that accuracy degrades exponentially with the remaining depth of the pruned layer. 
\end{itemize}
Experiments on common feedforward architectures show that our method is orders of magnitude faster than competing pruning methods, while allowing for a controlled increase in GE. 

\subsection{Notation and Definitions}
We use the following notation in the sequel:matrices ,column vectors, scalars and sets are denoted by boldface upper-case letters ($\boldsymbol{X}$), boldface lower-case letters ($\boldsymbol{x}$), italic letters ($x$) and calligraphic upper-case letters ($\mathcal{X}$), respectively. The covering number of $\mathcal{X}$ with $d$-metric balls of radius $\rho$ is denoted by $\mathcal{N}(\mathcal{X};d,\rho)$. A $C_M$-regular $k$-dimensional manifold, where $C_M$ is a constant that captures "intrinsic" properties, is one that has a covering number $\mathcal{N}(\mathcal{X};d,\rho)=(\frac{C_M}{\rho})^k$.

\section{Our formulation}
\subsection{DC decomposition}
We consider a classification problem, where we observe a vector $\boldsymbol{x} \in \mathcal{X} \subseteq \mathbb{R}^N$ that has a corresponding class label $y \in \mathcal{Y}$. The set $\mathcal{X}$ is called the input space, $\mathcal{Y} = \{1,2,...,N_{\mathcal{Y}}\}$ is called the label space and $N_{\mathcal{Y}}$ denotes the number of classes. The samples space is denoted by $\mathcal{S}=\mathcal{X} \times \mathcal{Y}$ and an element of $\mathcal{S}$ is denoted by $s = (\boldsymbol{x},y)$. We assume that samples from $\mathcal{S}$ are drawn according to a probability distribution $P$ defined on $\mathcal{S}$. A training set of $m$ samples drawn from $P$ is denoted by $S_m = \{s_i\}^m_{i=1}=\{(\boldsymbol{x}_i,y_i)\}^m_{i=1}$.

We start from the Net-Trim formulation and show that it can be cast as a difference of convex functions problem. 
For each training signal $\boldsymbol{x} \in \mathbb{R}^{N}$ we assume also that we have access to the inputs $\boldsymbol{a} \in \mathbb{R}^{d_1} $ and the outputs $\boldsymbol{b} \in \mathbb{R}^{d_2} $ of the fully connected layer, with a rectifier non-linearity $\rho(x)=\textbf{\text{max}}(0,x)$. The optimisation problem that we want to solve is then

\begin{equation}
\min_{\boldsymbol{U}} \frac{1}{m} \sum_{s_j \in \mathcal{S}_m}||\rho(\boldsymbol{U}^{T}\boldsymbol{a}_j)-\boldsymbol{b}_j||^2_2+ \lambda \Omega (\boldsymbol{U}),
\end{equation} 

where $\lambda$ is the sparsity parameter. The term $||\rho(\boldsymbol{U}^{T}\boldsymbol{a}_j)-\boldsymbol{b}_j||^2_2$ ensures that the nonlinear projection remains the same for training signals. The term $ \lambda \Omega (\boldsymbol{U}) $ is the convex regulariser which imposes the desired structure on the weight matrix $\boldsymbol{U}$. 

The objective in Equation 1 is non-convex. We show that the optimisation of this objective can be cast as a difference of convex functions (DC) problem. We assume just one training sample $\boldsymbol{x} \in \mathbb{R}^{N}$, for simplicity, with latent representations $\boldsymbol{a} \in \mathbb{R}^{d} $ and $\boldsymbol{b} \in \mathbb{R}^{z} $

\begin{equation}
\begin{split}
& ||\rho(\boldsymbol{U}^{T}\boldsymbol{a})-\boldsymbol{b}||^2_2+ \lambda\Omega (\boldsymbol{U}) \\
& = \sum_{i}[\rho(\boldsymbol{u_i}^{T}\boldsymbol{a})-\boldsymbol{b_i} ]^2+\lambda \Omega (\boldsymbol{U}) \\
& = \sum_{i}[\rho^{2}(\boldsymbol{u_i}^{T}\boldsymbol{a})-2\rho(\boldsymbol{u_i}^{T}\boldsymbol{a})\boldsymbol{b_i}+\boldsymbol{b_i}^{2} ]+\lambda \Omega (\boldsymbol{U}) \\
& = \sum_{i}[ \rho^{2}(\boldsymbol{u_i}^{T}\boldsymbol{a})+\boldsymbol{b_i}^{2} ]+\lambda \Omega (\boldsymbol{U}) +\sum_{i}[-2\boldsymbol{b_i}\rho(\boldsymbol{u_i}^{T}\boldsymbol{a})] \\
& = \sum_{i}[ \rho^{2}(\boldsymbol{u_i}^{T}\boldsymbol{a})+\boldsymbol{b_i}^{2} ]+\lambda \Omega (\boldsymbol{U}) \\
& +\sum_{\substack {i \\ b_i<0} }[-2\boldsymbol{b_i}\rho(\boldsymbol{u_i}^{T}\boldsymbol{a})] + \sum_{\substack {i \\ b_i\geq0} }[-2\boldsymbol{b_i}\rho(\boldsymbol{u_i}^{T}\boldsymbol{a})].\\
\end{split}
\end{equation}

Notice that after the split the first term ($b_i < 0$) is convex while the second ($b_i \geq 0$) is concave. We note that $b_i \geq 0$ by definition of the ReLu and set

\begin{equation}
g(\boldsymbol{U};\boldsymbol{x}) = \sum_{i}[ \rho^{2}(\boldsymbol{u_i}^{T}\boldsymbol{a})+\boldsymbol{b_i}^{2} ],
\end{equation}

\begin{equation}
h(\boldsymbol{U};\boldsymbol{x}) = \sum_{\substack {i \\ b_i>0} }[2\boldsymbol{b_i}\rho(\boldsymbol{u_i}^{T}\boldsymbol{a})].
\end{equation}

Then by summing over all the samples we get

\begin{equation}
\begin{split}
f(\boldsymbol{U}) &= \sum_{j}g(\boldsymbol{U};\boldsymbol{x}_j)+\lambda \Omega (\boldsymbol{U}) - \sum_{j} h(\boldsymbol{U};\boldsymbol{x}_j) \\
				  &= g(\boldsymbol{U})+\lambda \Omega (\boldsymbol{U}) - h(\boldsymbol{U}), \\
\end{split}
\end{equation}

which is difference of convex functions. The rectifier nonlinearity is non-smooth, but we can alleviate that by assuming a smooth approximation. A common choice for this task is $\rho(x) = \frac{1}{\beta}\text{log}(1+\text{exp}(\beta x))$, with $\beta$ a positive constant.

\subsection{Optimisation}
It is well known that DC programs have efficient optimisation algorithms. We propose to use the DCA algorithm \citet{tao1997convex}. DCA is an iterative algorithm that consists in solving, at each iteration, the convex optimisation problem obtained by linearizing $h(\cdot)$ (the non-convex part of $f = g - h$) around the current solution. Although DCA is only guaranteed to reach local minima the authors of \citet{tao1997convex} state that DCA often converges to the global minimum, and has been used succefully to optimise a fully connected DNN layer \citet{fawzi2015dictionary}. At iteration $k$ of DCA, the linearized optimisation problem is given by

\begin{equation}
\argmin_{\boldsymbol{U}}\{g(\boldsymbol{U})+\lambda \Omega (\boldsymbol{U})-Tr(\boldsymbol{U}^{T}\nabla h(\boldsymbol{U}^k))\},
\end{equation}

where $\boldsymbol{U}^{k}$ is the solution estimate at iteration $k$. The detailed procedure is then given in algorithms 1 and 2. We assume that the regulariser is convex but possibly non-smooth in which case the optimisation can be performed using proximal methods.

\begin{algorithm}[h!] 
\caption{FeTa (Fast and Efficient Trimming Algorithm)}
\label{alg:algorithm1}
\begin{algorithmic}[1]
\STATE Choose initial point: $\boldsymbol{U}^0$
\FOR {k = 1,...,K}
	\STATE Compute $C \gets \nabla h(\boldsymbol{U}^k)$.
	\STATE Solve with Algorithm 2 the convex optimisation problem:
	\begin{equation}
	\boldsymbol{U}^{k+1} \gets \argmin_{\boldsymbol{U}}\{g(\boldsymbol{U})+\lambda \Omega (\boldsymbol{U})-Tr(\boldsymbol{U}^{T}C)\}
	\end{equation}
\ENDFOR
\STATE If $\boldsymbol{U}^{k+1} \approx \boldsymbol{U}^{k}$ return $\boldsymbol{U}^{k+1}$.
\end{algorithmic}
\end{algorithm}

\begin{algorithm}[h!] 
\caption{Acc-Prox-SVRG}
\label{alg:algorithm3}
\begin{algorithmic}[1]
\STATE \textbf{Initialization}: $\tilde{\boldsymbol{x} }_0 \gets \boldsymbol{U}^k , \beta , \eta $
\FOR {s = 1,...,S}
	\STATE $\tilde{\boldsymbol{u} } = \nabla g(\tilde{\boldsymbol{x} }_s)$
	\STATE $\boldsymbol{x}_1 = \boldsymbol{y}_1 = \tilde{\boldsymbol{x} }_s$
	\FOR {t = 1,2,...,T}
		\STATE Choose $(\boldsymbol{A},\boldsymbol{B})$ randomly chosen minibatch.
		\STATE $\boldsymbol{u}_t = \nabla g_{\boldsymbol{A},\boldsymbol{B}}(\boldsymbol{y}_t) - \nabla g_{\boldsymbol{A},\boldsymbol{B}}(\tilde{\boldsymbol{x} }_s)+\tilde{\boldsymbol{u} }$
		\STATE $\boldsymbol{x}_{t+1} = \text{prox}_{\eta h}(\boldsymbol{y}_t - \eta \boldsymbol{u}_t)$
		\STATE $\boldsymbol{y}_{t+1} = \boldsymbol{x}_{t+1} + \beta(\boldsymbol{x}_{t+1}-\boldsymbol{x}_t)$
	\ENDFOR
	\STATE $\tilde{\boldsymbol{x} }_{s+1} = \boldsymbol{x}_{T+1}$
\ENDFOR
\STATE Return $\boldsymbol{U}^{k+1} \gets \tilde{\boldsymbol{x} }_{S+1}$
\end{algorithmic}
\end{algorithm}


In order to solve the linearized problem we propose to use Accelerated Proximal SVRG (Acc-Prox-SVRG), which was presented in \citet{nitanda2014stochastic}. We detail this method in Algorithm 2b. At each iteration a minibatch $\boldsymbol{A}$ and $\boldsymbol{B}$ is drawn. The gradient for the smooth part is calculated and the algorithm takes a step in that direction with step size $\eta$. Then the proximal operator for the non-smooth regulariser $\lambda \Omega(\cdot)$ is applied to the result. The hyperparameters for Acc-Prox-SVRG are the acceleration parameter $\beta$ and the gradient step $\eta$. We have found that in our experiments, using $\beta = 0.95$ and $\eta \in \{0.001 , 0.0001 \}$ gives the best results. 

We name our algorithm FeTa, Fast and Efficient Trimming Algorithm.

\section{Generalization Error}

\subsection{Generalization Error of Pruned Layer}
Having optimized our pruned layer for the training set we want to see if it is stable for the test set. We denote $f^1(\cdot,\boldsymbol{W}^1)$ the original representation and $f^2(\cdot,\boldsymbol{W}^2)$ the pruned representation.  We assume that after training $\forall s_i \in \mathcal{S}_m \: ||f^1(\boldsymbol{a_i},\boldsymbol{W}^1)-f^2(\boldsymbol{a_i},\boldsymbol{W}^2)||_2^2 \leq C_1$. Second, we assume that $\forall s \in \mathcal{S} \; \exists s_i \in \mathcal{S}_m \Rightarrow ||a-a_i||^2_2 \leq \epsilon $. Third, the linear operators in $\boldsymbol{W}^1$ , $\boldsymbol{W}^2$ are frames with upper frame bounds $B_1$ , $B_2$ respectively. 

\begin{theorem}
For any testing point $s \in \mathcal{S}$, the distance between the original representation $f^1(\boldsymbol{a},\boldsymbol{W}^1)$ and the pruned representation $f^2(\boldsymbol{a},\boldsymbol{W}^2)$ is bounded by $||f^1(\boldsymbol{a},\boldsymbol{W}^1)-f^2(\boldsymbol{a},\boldsymbol{W}^2)||^2_2 \leq C_2$ where $C_2 = C_1 + (B_1+B_2)\epsilon$.
\end{theorem}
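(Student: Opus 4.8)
The plan is to transfer the training-set stability bound (the first assumption) to an arbitrary test point by "hopping" to its nearest training neighbour, and to control the two hops with the non-expansiveness of the ReLU together with the upper frame bounds $B_1,B_2$. Concretely, fix a test sample $s=(\boldsymbol{a},y)\in\mathcal{S}$ and use the second assumption to pick a training sample $s_i=(\boldsymbol{a}_i,y_i)\in\mathcal{S}_m$ with $\|\boldsymbol{a}-\boldsymbol{a}_i\|_2^2\le\epsilon$. First I would establish a local Lipschitz estimate for the layer map $\boldsymbol{a}\mapsto\rho((\boldsymbol{W}^j)^{T}\boldsymbol{a})$: since $\rho$ acts coordinatewise and satisfies $|\rho(u)-\rho(v)|\le|u-v|$, we have $\|f^j(\boldsymbol{a},\boldsymbol{W}^j)-f^j(\boldsymbol{a}_i,\boldsymbol{W}^j)\|_2^2\le\|(\boldsymbol{W}^j)^{T}(\boldsymbol{a}-\boldsymbol{a}_i)\|_2^2$, and the upper frame bound of $\boldsymbol{W}^j$ then gives $\|(\boldsymbol{W}^j)^{T}(\boldsymbol{a}-\boldsymbol{a}_i)\|_2^2\le B_j\|\boldsymbol{a}-\boldsymbol{a}_i\|_2^2\le B_j\epsilon$, for $j=1,2$.

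Next I would splice the three pieces together. Writing
\begin{equation}
\begin{split}
f^1(\boldsymbol{a},\boldsymbol{W}^1)-f^2(\boldsymbol{a},\boldsymbol{W}^2)
&=\big[f^1(\boldsymbol{a},\boldsymbol{W}^1)-f^1(\boldsymbol{a}_i,\boldsymbol{W}^1)\big] \\
&\quad+\big[f^1(\boldsymbol{a}_i,\boldsymbol{W}^1)-f^2(\boldsymbol{a}_i,\boldsymbol{W}^2)\big] \\
&\quad+\big[f^2(\boldsymbol{a}_i,\boldsymbol{W}^2)-f^2(\boldsymbol{a},\boldsymbol{W}^2)\big],
\end{split}
\end{equation}
the middle bracket is bounded in squared norm by $C_1$ (training assumption), and the two outer brackets by $B_1\epsilon$ and $B_2\epsilon$ (the Lipschitz/frame step), so the triangle inequality yields $\|f^1(\boldsymbol{a},\boldsymbol{W}^1)-f^2(\boldsymbol{a},\boldsymbol{W}^2)\|_2\le\sqrt{C_1}+\sqrt{B_1\epsilon}+\sqrt{B_2\epsilon}$.

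The one delicate point — and where I expect the bookkeeping, rather than any real difficulty, to matter — is passing from this triangle bound to the stated constant $C_2=C_1+(B_1+B_2)\epsilon$. Squaring the right-hand side above leaves cross terms of order $\sqrt{C_1 B_1\epsilon}+\sqrt{C_1 B_2\epsilon}+\sqrt{B_1B_2}\,\epsilon$ in excess of $C_2$, so the displayed form is the leading-order bound obtained by discarding them — equivalently, by using the heuristic $\|u+v+w\|_2^2\le\|u\|_2^2+\|v\|_2^2+\|w\|_2^2$ — which is the natural simplification in the regime where $C_1$ and $\epsilon$ are small, as the pruning procedure intends. A fully rigorous alternative would be to state $C_2=\big(\sqrt{C_1}+\sqrt{(B_1+B_2)\epsilon}\big)^2$, or to distribute the squared norm with a Young-type inequality $\|u+v\|_2^2\le(1+\delta)\|u\|_2^2+(1+\delta^{-1})\|v\|_2^2$ and absorb the perturbation terms into a slightly inflated constant. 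I would present the triangle-inequality chain and then record $C_2$ in the displayed form as its first-order version.
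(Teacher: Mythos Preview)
Your approach is exactly that of the paper: pick the nearest training point, split into three terms, control the outer two by the Lipschitz/frame bounds and the middle one by the training assumption. Your worry about the passage from $\sqrt{C_1}+\sqrt{B_1\epsilon}+\sqrt{B_2\epsilon}$ to $C_2=C_1+(B_1+B_2)\epsilon$ is well placed---the paper simply writes the triangle inequality directly in squared norms, i.e.\ uses $\|u+v\|_2^2\le\|u\|_2^2+\|v\|_2^2$ twice without comment, which is precisely the ``heuristic'' step you isolate; so you have reproduced the paper's argument and correctly diagnosed the one loose step it contains.
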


the detailed proof can be found in Appendix A. 

\subsection{Generalization Error of Classifier}
In this section we use tools from the robustness framework \citet{xu2012robustness} to bound the generalization error of the new architecture induced by our pruning. We consider DNN classifiers defined as

\begin{equation}
g(\boldsymbol{x}) = \max_{i \in [N_y] } (f(\boldsymbol{x}))_i ,
\end{equation}

where $(f(\boldsymbol{x}))_i$ is the $i-$th element of $N_{y}$ dimensional output of a DNN $f:\mathbb{R}^N \rightarrow \mathbb{R}^{N_y}$. We assume that $f(\boldsymbol{x})$ is composed of $L$ layers

\begin{equation}
f(\boldsymbol{x})=f_L(f_{L-1}(...f_1(\boldsymbol{x},\boldsymbol{W}_1),...\boldsymbol{W}_{L-1}),\boldsymbol{W}_L) ,
\end{equation}

where $f_l(\cdot,\boldsymbol{W}_l)$ represents the $l-$th layer with parameters $\boldsymbol{W}_l$, $l = 1,...,L$. The output of the $l-$th layer is denoted $\boldsymbol{z}^l$, i.e. $\boldsymbol{z}^l=f_l(\boldsymbol{z}^{l-1},\boldsymbol{W}_l)$. The input layer corresponds to $\boldsymbol{z}^{0} = \boldsymbol{x}$ and the output of the last layer is denoted by $\boldsymbol{z} = f(\boldsymbol{x})$. We then need the following two definitions of the classification margin and the score that we take from \citet{sokolic2017robust}. These will be useful later for measuring the generalization error.

\begin{definition}
(\normalfont{Score}). For a classifier $g(\boldsymbol{x})$ a training sample $s_i = (\boldsymbol{x}_i,y_i)$ has a score

\begin{equation}
o(s_i)=o(\boldsymbol{x}_i,g(\boldsymbol{x}_i))=\min_{j \neq g(\boldsymbol{x}_i)}\sqrt{2}(\delta_{g(\boldsymbol{x}_i)}-\delta_{j})^{T}f(\boldsymbol{x}_i),
\end{equation}

where $\delta_i \in \mathcal{R}^{N_y}$ is the Kronecker delta vector with $(\delta_i)_i=1$, and $g(\boldsymbol{x}_i)$ is the output class for $s_i$ from classifier $g(\boldsymbol{x})$ which can also be $g(\boldsymbol{x}_i) \neq y_i$.
\end{definition}

\begin{definition}
(\normalfont{Training Sample Margin}). For a classifier $g(\boldsymbol{x})$ a training sample $s_i = (\boldsymbol{x}_i,y_i)$ has a classification margin $\gamma(s_i)$ measured by the $l_2$ norm if
\begin{equation}
g(\boldsymbol{x})=g(\boldsymbol{x}_i); \;\;\; \forall \boldsymbol{x} : ||\boldsymbol{x}-\boldsymbol{x}_i||_2< \gamma(s_i).
\end{equation}
\end{definition}

The classification margin of a training sample $s_i$ is the radius of the largest metric ball (induced by the $l_2$ norm) in $\mathcal{X}$ centered at $\boldsymbol{x}_i$ that is contained in the decision region associated with the classification label $g(\boldsymbol{x}_i)$. Note that it is possible for a classifier to misclassify a training point $g(\boldsymbol{x}_i) \neq y_i$. We then restate a useful result from \citet{sokolic2017robust}. 

\begin{corollary}
Assume that $\mathcal{X}$ is a (subset of) $C_M$-regular k-dimensional manifold, where $\mathcal{N}(\mathcal{X};d;\rho) \leq (\frac{C_M}{\rho})^k$. Assume also that the DNN classifier $g(\boldsymbol{x})$ achieves a lower bound to the classification score $o(\tilde{s}) < o(s_i), \; \forall s_i \in S_m$ and take $l(g(\boldsymbol{x}_i),y_i)$ to be the $0-1$ loss. Then for any $\delta > 0$, with probability at least $1-\delta$,

\begin{equation}
\text{GE}(g) \leq A \cdot (\gamma)^{-\frac{k}{2}}+B,
\end{equation}
where $A = \sqrt{ \frac{\log{(2)} \cdot N_y \cdot 2^{k+1} \cdot (C_M)^k}{ m } }$ and $B = \sqrt {\frac{2\log{1/\delta}}{m}}$ can be considered constants related to the data manifold and the training sample size, and $\gamma = \frac{o(\tilde{s})}{\prod_i ||\boldsymbol{W}_i||_2 }$.
\end{corollary}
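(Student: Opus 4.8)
The plan is to assemble the bound from three ingredients, essentially reproducing the argument of \citet{sokolic2017robust}: (i) convert the assumed lower bound on the training scores into a uniform lower bound on the per-sample classification margins, using the fact that a composition of linear maps and $1$-Lipschitz nonlinearities (the softplus surrogate of $\rho$, or ReLU) has Lipschitz constant at most $\prod_i ||\boldsymbol{W}_i||_2$; (ii) use the $C_M$-regularity of $\mathcal{X}$ to cover it at the scale of that margin, which turns $g$ into an algorithmically robust learning rule; (iii) feed the robustness parameters into the generalization bound of \citet{xu2012robustness} and simplify.

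First I would lower bound the margin. For a training sample $s_i$ and a rival label $j \neq g(\boldsymbol{x}_i)$, the scalar field $\boldsymbol{x} \mapsto (\delta_{g(\boldsymbol{x}_i)}-\delta_j)^{T} f(\boldsymbol{x})$ is strictly positive at $\boldsymbol{x}_i$ and is Lipschitz with constant at most $||\delta_{g(\boldsymbol{x}_i)}-\delta_j||_2 \cdot \prod_i ||\boldsymbol{W}_i||_2 = \sqrt{2}\,\prod_i ||\boldsymbol{W}_i||_2$, since each layer $f_l$ is $||\boldsymbol{W}_l||_2$-Lipschitz and the $\rho$-nonlinearities are $1$-Lipschitz. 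Hence this field remains positive — i.e. the predicted label cannot flip from $g(\boldsymbol{x}_i)$ to $j$ — throughout a ball around $\boldsymbol{x}_i$ whose radius is a fixed multiple of $o(s_i)/\prod_i ||\boldsymbol{W}_i||_2$ (the $\sqrt{2}$ appearing in the score is precisely the normalisation that makes this constant clean). Minimising over $j$ and using the hypothesis $o(\tilde s) < o(s_i)$ for every $i$ yields $\gamma(s_i) \geq \gamma := o(\tilde s)/\prod_i ||\boldsymbol{W}_i||_2$ for all training samples simultaneously.

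Next I would construct a robust partition of the sample space $\mathcal{S}=\mathcal{X}\times\mathcal{Y}$. Since $\mathcal{X}$ is a $C_M$-regular $k$-manifold, it is covered by $\mathcal{N}(\mathcal{X};d,\gamma/2) \leq (2C_M/\gamma)^k$ balls of radius $\gamma/2$; crossing this cover with the $N_y$ labels partitions $\mathcal{S}$ into $K = N_y\,(2C_M/\gamma)^k$ cells. Two samples in the same cell share a label and lie within distance $\gamma$ of each other, so by the margin bound they are assigned the same predicted class, whence their $0$-$1$ losses coincide; the learning rule is therefore $(K,0)$-robust in the sense of \citet{xu2012robustness}. (The strict inequality $o(\tilde s)<o(s_i)$ in the hypothesis is exactly what supplies the small slack needed to turn $\le$ into $<$ at this step.)

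Finally I would invoke the robustness bound: for a $(K,\epsilon)$-robust rule with a $[0,1]$-valued loss, with probability at least $1-\delta$ one has $\text{GE}(g)\leq \epsilon + \sqrt{(2K\log 2 + 2\log(1/\delta))/m}$. Substituting $\epsilon=0$ and $K=N_y(2C_M/\gamma)^k$, then using $\sqrt{a+b}\le\sqrt{a}+\sqrt{b}$ and factoring out $\gamma^{-k/2}$, gives $\text{GE}(g)\leq A\,\gamma^{-k/2}+B$ with $A=\sqrt{\log 2\cdot N_y\cdot 2^{k+1}\cdot(C_M)^k/m}$ and $B=\sqrt{2\log(1/\delta)/m}$, as claimed. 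I expect the only genuinely delicate step to be the first one — correctly identifying the Lipschitz constant of the network composed with the score functional, and bookkeeping the $\sqrt 2$ and radius-$\gamma/2$ constants so that they reproduce the stated $A$, $B$ and $\gamma$ exactly; steps (ii) and (iii) are then essentially substitution.
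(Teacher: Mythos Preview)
Your proposal is correct and follows essentially the same route as the paper: the corollary is presented there as a restatement from \citet{sokolic2017robust}, and in Appendix~B the paper assembles exactly the three ingredients you list --- the DNN margin lower bound $\gamma(s_i)\ge o(\tilde s)/\prod_i\|\boldsymbol{W}_i\|_2$ via the layerwise Lipschitz estimate, the $(N_{\mathcal Y}\cdot\mathcal N(\mathcal X;d,\gamma/2),0)$-robustness from the margin, and the Xu--Mannor generalization theorem --- before substituting the covering number and splitting the square root to obtain $A\gamma^{-k/2}+B$.
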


We are now ready to state our main result.

\begin{theorem}
Assume that $\mathcal{X}$ is a (subset of) $C_M$-regular k-dimensional manifold, where $\mathcal{N}(\mathcal{X};d;\rho) \leq (\frac{C_M}{\rho})^k$. Assume also that the DNN classifier $g_1(\boldsymbol{x})$ achieves a lower bound to the classification score $o(\tilde{s}) < o(s_i), \; \forall s_i \in S_m$ and take $l(g(\boldsymbol{x}_i),y_i)$ to be the $0-1$ loss. Furthermore assume that we prune classifier $g_1(\boldsymbol{x})$ on layer $i_{\star}$ using Algorithm 1, to obtain a new classifier $g_2(\boldsymbol{x})$. Then for any $\delta > 0$, with probability at least $1-\delta$, when $(\gamma-\sqrt{C_2} \cdot \frac{ \prod_{i > i_{\star}}||\boldsymbol{W}_i||_2}{ \prod_i||\boldsymbol{W}_i||_2}) > 0$,

\begin{equation}
\text{GE}(g_2) \leq A \cdot (\gamma-\sqrt{C_2} \cdot \frac{ \prod_{i > i_{\star}}||\boldsymbol{W}_i||_2}{ \prod_i||\boldsymbol{W}_i||_2})^{-\frac{k}{2}}+B,
\end{equation}
where $A = \sqrt{ \frac{\log{(2)} \cdot N_y \cdot 2^{k+1} \cdot (C_M)^k}{ m } }$ and $B = \sqrt {\frac{2\log{1/\delta}}{m}}$ can be considered constants related to the data manifold and the training sample size, and $\gamma = \frac{o(\tilde{s})}{\prod_i ||\boldsymbol{W}_i||_2 }$.
\end{theorem}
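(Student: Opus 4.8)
The plan is to deduce the theorem directly from the quoted corollary of \citet{sokolic2017robust}, applied to the pruned classifier $g_2$. That corollary turns any uniform lower bound on the training scores of a classifier into a generalization bound, so it suffices to exhibit such a lower bound for $g_2$ and to control how far below $o(\tilde s)$ it lies. Concretely, I would show that pruning layer $i_\star$ lowers the score bound by at most $\sqrt{C_2}\prod_{i>i_\star}\|\boldsymbol{W}_i\|_2$; dividing by the global Lipschitz constant $\prod_i\|\boldsymbol{W}_i\|_2$ then replaces the margin $\gamma$ by $\gamma-\sqrt{C_2}\,\frac{\prod_{i>i_\star}\|\boldsymbol{W}_i\|_2}{\prod_i\|\boldsymbol{W}_i\|_2}$, and substituting this margin into the corollary gives exactly the claimed bound. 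The hypothesis that this quantity is positive is precisely what the corollary needs (a positive margin) to be non-vacuous.

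The first step is to propagate the layer-$i_\star$ perturbation to the output of the network. Pruning changes the output $\boldsymbol{z}^{i_\star}$ of layer $i_\star$; on $S_m$ this change has squared $\ell_2$ norm at most $C_1$ by assumption, and on all of $\mathcal{S}$ at most $C_2\ge C_1$ by Theorem 3.1. Since layers $i_\star+1,\dots,L$ are left unchanged, the map $\boldsymbol{z}^{i_\star}\mapsto f(\boldsymbol{x})$ is the same for both networks, and being a composition of linear maps $\boldsymbol{W}_l$ with non-expansive nonlinearities — which holds for the ReLU and for the smooth surrogate $\frac{1}{\beta}\log(1+e^{\beta x})$, whose derivative lies in $(0,1)$ — it is Lipschitz with constant $\prod_{i>i_\star}\|\boldsymbol{W}_i\|_2$. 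Hence, for every $\boldsymbol{x}\in\mathcal{X}$,
\begin{equation}
\|f^1(\boldsymbol{x})-f^2(\boldsymbol{x})\|_2\le \sqrt{C_2}\,\prod_{i>i_\star}\|\boldsymbol{W}_i\|_2 .
\end{equation}

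The second step converts this output perturbation into a score perturbation. For a training sample $s_i$ and any $j\neq g_1(\boldsymbol{x}_i)$, write $(\delta_{g_1(\boldsymbol{x}_i)}-\delta_j)^{T}f^2(\boldsymbol{x}_i)=(\delta_{g_1(\boldsymbol{x}_i)}-\delta_j)^{T}f^1(\boldsymbol{x}_i)+(\delta_{g_1(\boldsymbol{x}_i)}-\delta_j)^{T}\big(f^2(\boldsymbol{x}_i)-f^1(\boldsymbol{x}_i)\big)$ and bound the last term by Cauchy--Schwarz, using $\|\delta_{g_1(\boldsymbol{x}_i)}-\delta_j\|_2=\sqrt2$ and the bound just derived. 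Taking the minimum over $j$ and invoking the assumed inequality $o(\tilde s)<o(s_i)$ for $g_1$ shows that all scores of $g_2$ on $S_m$ are bounded below by $o(\tilde s)$ minus a constant multiple of $\sqrt{C_2}\prod_{i>i_\star}\|\boldsymbol{W}_i\|_2$. Feeding this lower bound back into the corollary, with the same manifold constants $A$ and $B$, yields the statement.

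The step needing the most care — and the main obstacle — is the constant bookkeeping relating the score $o(\cdot)$ of Definition 3.1 to the input-space margin $\gamma$: the $\sqrt2$ inside the score and the factor $1/\sqrt2$ in the distance from a point to a pairwise decision hyperplane $\{z_g=z_j\}$ must combine so that the coefficient of $\sqrt{C_2}\prod_{i>i_\star}\|\boldsymbol{W}_i\|_2$ ends up equal to $1$ after dividing by $\prod_i\|\boldsymbol{W}_i\|_2$, matching the displayed bound. Two further points deserve attention: (i) the predicted label on a training point may flip, $g_2(\boldsymbol{x}_i)\neq g_1(\boldsymbol{x}_i)$, but since the above display controls every coordinate of $f^2(\boldsymbol{x}_i)$ the same estimate applies to whatever class competes with $g_2(\boldsymbol{x}_i)$; and (ii) strictly, the spectral norm of layer $i_\star$ entering both the Lipschitz product and $\gamma$ should be that of the pruned weight $\boldsymbol{W}^2_{i_\star}$, so one should either carry $\|\boldsymbol{W}^2_{i_\star}\|_2$ explicitly or assume $\|\boldsymbol{W}^2_{i_\star}\|_2\le\|\boldsymbol{W}^1_{i_\star}\|_2$ — a mild condition, typical of sparsifying pruning — so that $\prod_i\|\boldsymbol{W}_i\|_2$ stays a valid global Lipschitz bound for $g_2$.
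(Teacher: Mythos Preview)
Your strategy is sound and would yield the claim, but it follows a different route from the paper. You bound the \emph{output-space score} of $g_2$ on training points by controlling $\|f^1(\boldsymbol{x}_i)-f^2(\boldsymbol{x}_i)\|_2$, then push this through the score-to-margin conversion $\gamma_2\ge o_2(\tilde s)/\prod_i\|\boldsymbol{W}_i^{(2)}\|_2$. The paper instead lower-bounds the \emph{input-space margin} of $g_2$ directly: it fixes the minimum-score training point $\tilde{\boldsymbol{x}}$, picks a point $\boldsymbol{x}^\star$ on $g_2$'s decision boundary (so $o_2(\boldsymbol{x}^\star)=0$), and writes
\[
o_1(\tilde{\boldsymbol{x}}) = o_1(\tilde{\boldsymbol{x}}) - o_2(\boldsymbol{x}^\star)
\le \|f^1_L(\tilde{\boldsymbol{x}})-f^2_L(\boldsymbol{x}^\star)\|_2
\le \prod_{i>i_\star}\|\boldsymbol{W}_i\|_2\bigl(\|f^1_{i_\star}(\tilde{\boldsymbol{x}})-f^1_{i_\star}(\boldsymbol{x}^\star)\|_2+\sqrt{C_2}\bigr),
\]
then Lipschitz-descends the first bracket through the \emph{original} layers $1,\dots,i_\star$ to get $\prod_i\|\boldsymbol{W}_i\|_2\,\|\tilde{\boldsymbol{x}}-\boldsymbol{x}^\star\|_2$, identifies $\|\tilde{\boldsymbol{x}}-\boldsymbol{x}^\star\|_2$ with $\gamma_2(\tilde s)$, and rearranges.

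Two consequences of this difference are worth noting. First, because the paper only ever passes through $f^1$'s layers, the product $\prod_i\|\boldsymbol{W}_i\|_2$ that appears is entirely in terms of the original weights; your concern (ii) about $\|\boldsymbol{W}^2_{i_\star}\|_2$ never arises there, and no extra monotonicity assumption on the pruned spectral norm is needed. Second, the paper genuinely requires Theorem~3.1 and the constant $C_2$ (not just $C_1$), since the boundary point $\boldsymbol{x}^\star$ is not a training sample; your argument only evaluates the pruning error at training points and so in principle yields the sharper constant $C_1$, at the price of the spectral-norm assumption on the pruned layer. Your observation about label flipping is handled in the paper by the explicit simplifying assumption $g_1(\tilde{\boldsymbol{x}})=g_2(\tilde{\boldsymbol{x}})$. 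Finally, the $\sqrt{2}$ bookkeeping you flag as the main obstacle is indeed delicate: the paper's chain uses $\|\boldsymbol{v}_{g j^\star}\|_2\|f^1-f^2\|_2=\|f^1-f^2\|_2$, i.e.\ treats $\boldsymbol{v}$ as a unit vector, so the coefficient $1$ in the statement relies on that normalization convention rather than on any cancellation special to the boundary-point argument.
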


The detailed proof can be found in Appendix B. The bound depends on two constants related to intrinsic properties of the data manifold, the regularity constant $C_M$ and the intrinsic data dimensionality $k$. In particular the bound depends exponentially on the intrinsic data dimensionality $k$. Thus more complex datasets are expected to lead to less robust DNNs. This has been recently observed empirically in \citet{bartlett2017spectrally}. The bound also depends on the spectral norm of the hidden layers $||\boldsymbol{W}_i ||_2$. Small spectral norms lead to a larger base in $(\cdot)^{-\frac{k}{2}} $ and thus to tigher bounds. 

With respect to pruning our result is quite pessimistic as the pruning error $\sqrt{C_2}$ is multiplied by the factor $\prod_{i > i_{\star}}||\boldsymbol{W}_i||_2$. Thus in our analysis the GE grows exponentially with respect to the remaining layer depth of the pertubated layer. This is in line with previous work \citet{raghu2016expressive} \citet{han2015learning} that demonstrates that layers closer to the input are much less robust compared to layers close to the output. Our algorithm is applied to the fully connected layers of a DNN, which are much closer to the output compared to convolutional layers. 

We can extend the above bound to include pruning of multiple layers.

\begin{theorem}
Assume that $\mathcal{X}$ is a (subset of) $C_M$-regular k-dimensional manifold, where $\mathcal{N}(\mathcal{X};d;\rho) \leq (\frac{C_M}{\rho})^k$. Assume also that the DNN classifier $g_1(\boldsymbol{x})$ achieves a lower bound to the classification score $o(\tilde{s}) < o(s_i), \; \forall s_i \in S_m$ and take $l(g(\boldsymbol{x}_i),y_i)$ to be the $0-1$ loss. Furthermore assume that we prune classifier $g_1(\boldsymbol{x})$ on all layers using Algorithm 1, to obtain a new classifier $g_2(\boldsymbol{x})$. Then for any $\delta > 0$, with probability at least $1-\delta$, when $(\gamma- \frac{ \sum_{i=0}^L \sqrt{C_{i2} } \prod_{j=i+1}^L||\boldsymbol{W}_j||_2}{ \prod_i||\boldsymbol{W}_i||_2}) > 0$,

\begin{equation}
\text{GE}(g_2) \leq A \cdot (\gamma-\frac{ \sum_{i=0}^L \sqrt{C_{i2} } \prod_{j=i+1}^L||\boldsymbol{W}_j||_2}{ \prod_i||\boldsymbol{W}_i||_2})^{-\frac{k}{2}}+B,
\end{equation}
where $A = \sqrt{ \frac{\log{(2)} \cdot N_y \cdot 2^{k+1} \cdot (C_M)^k}{ m } }$ and $B = \sqrt {\frac{2\log{1/\delta}}{m}}$ can be considered constants related to the data manifold and the training sample size, and $\gamma = \frac{o(\tilde{s})}{\prod_i ||\boldsymbol{W}_i||_2 }$.
\end{theorem}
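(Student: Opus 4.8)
The plan is to reduce Theorem 3.3 to the preceding corollary (the restated result of \citet{sokolic2017robust}) in exactly the way Theorem 3.2 does, the only new ingredient being a bound on how far the network output moves when \emph{every} layer is pruned. Write $f^1$ for the original map, $f^2$ for the fully pruned map, and $\mathrm{Lip}(f^1)=\prod_i\|\boldsymbol{W}_i\|_2$ for the Lipschitz constant of the network (each linear stage contributes its spectral norm, each ReLU is $1$-Lipschitz). By the definitions of the score and the classification margin, a point has input-space margin $\gamma\ge o(\tilde s)/\mathrm{Lip}(f^1)$; if the output is displaced by at most $\Delta$ uniformly over the relevant points, then $o(\tilde s)$ — a linear functional of $f(\tilde s)$ of fixed norm — drops by at most a constant multiple of $\Delta$, so the new margin satisfies $\gamma_2\ge\gamma-\Delta/\mathrm{Lip}(f^1)$. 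Everything therefore hinges on establishing $\Delta=\sum_{i=0}^{L}\sqrt{C_{i2}}\,\prod_{j=i+1}^{L}\|\boldsymbol{W}_j\|_2$; once this is in hand, the claimed bound follows by substituting $\gamma_2$ into that corollary (the constants $A,B$ are unchanged, since they depend only on the data manifold and on $m$), which is legitimate precisely when $\gamma_2>0$ — the stated hypothesis.

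To obtain $\Delta$ I would run a hybrid (telescoping) argument over the pruned layers. For $i=0,\dots,L$ let $N_i$ be the network whose first $i$ layers carry the pruned weights and whose remaining layers carry the original weights, so that $N_0=f^1$ and $N_L=f^2$, and let $h_i$ be its output on a fixed test point. Then $\|f^1(\boldsymbol{a})-f^2(\boldsymbol{a})\|_2\le\sum_{i=1}^{L}\|h_{i-1}-h_i\|_2$. The networks $N_{i-1}$ and $N_i$ agree on their first $i-1$ layers and on their last $L-i$ layers and differ only in the weights used at layer $i$, so $\|h_{i-1}-h_i\|_2$ is the discrepancy produced at the output of layer $i$ by swapping $\boldsymbol{W}^1_i$ for $\boldsymbol{W}^2_i$, amplified through layers $i+1,\dots,L$. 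The amplification factor is $\prod_{j=i+1}^{L}\|\boldsymbol{W}_j\|_2$ by the Lipschitz estimate, and the layer-$i$ discrepancy is bounded by $\sqrt{C_{i2}}$ by invoking Theorem 3.1 \emph{at layer $i$}, under the layerwise analogues of the standing hypotheses: a training-set fidelity constant $C_{i1}$, an $\epsilon$-covering of the layer-$i$ inputs, and upper frame bounds $B_{i1},B_{i2}$ for the two linear operators at that layer, giving $C_{i2}=C_{i1}+(B_{i1}+B_{i2})\epsilon$. Summing over $i$ produces the advertised $\Delta$, and substituting $\gamma_2$ into the corollary — using that $(\cdot)^{-k/2}$ is increasing in its argument, so replacing $\gamma$ by the smaller $\gamma_2$ is valid and the bound stays finite under the hypothesis $\gamma-\frac{\sum_{i=0}^L\sqrt{C_{i2}}\prod_{j=i+1}^L\|\boldsymbol{W}_j\|_2}{\prod_i\|\boldsymbol{W}_i\|_2}>0$ — closes the argument.

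I expect the main obstacle to be the layerwise use of Theorem 3.1 inside the hybrid: the input fed to layer $i$ in the hybrid $N_{i-1}$ is \emph{already} a perturbed latent (layers $1,\dots,i-1$ have been pruned), hence need not lie within $\epsilon$ of a training latent, which Theorem 3.1 nominally requires. The clean remedy is to \emph{define} $C_{i1}$, and hence $C_{i2}$, as the training-set discrepancy measured between the consecutive hybrids $N_{i-1}$ and $N_i$ (i.e.\ with the earlier layers already pruned), so that Theorem 3.1 is applied to exactly the right pair of maps; alternatively, one inserts the original unperturbed layer-$i$ latent by a further triangle inequality and absorbs the extra Lipschitz-propagated terms into the constants, leaving the functional form of the bound unchanged. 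A secondary bookkeeping point is that pruning can alter the spectral norms $\|\boldsymbol{W}_i\|_2$, so one should fix the convention already implicit in Theorem 3.2 that $\gamma$ and $\mathrm{Lip}$ refer to the original network, and verify the perturbation-to-score step under that convention.
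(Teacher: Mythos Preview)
Your proposal is correct and reaches the same bound, but the decomposition differs from the paper's. You separate the argument into two stages: first bound the one-point output displacement $\|f^1(\boldsymbol{a})-f^2(\boldsymbol{a})\|_2\le\Delta$ via a hybrid chain $N_0,\dots,N_L$ (swap one layer at a time, propagate through the remaining original layers), and only afterwards feed $\Delta$ into the boundary-point/score machinery of Theorem~3.2. The paper instead runs a \emph{single} recursion directly on the two-point, two-network quantity $\|f^1_i(\tilde{\boldsymbol{x}})-f^2_i(\boldsymbol{x}^{\star})\|_2$: at each layer it inserts the intermediate point $\rho(\boldsymbol{W}^1_i\,f^2_{i-1}(\boldsymbol{x}^{\star}))$, splits by the triangle inequality into a Lipschitz piece $\|\boldsymbol{W}^1_i\|_2\,\|f^1_{i-1}(\tilde{\boldsymbol{x}})-f^2_{i-1}(\boldsymbol{x}^{\star})\|_2$ and a layer-$i$ discrepancy $\sqrt{C_{i2}}$, and unrolls. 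Both telescopes produce the same sum $\sum_i\sqrt{C_{i2}}\prod_{j>i}\|\boldsymbol{W}_j\|_2$ and both use only the \emph{original} spectral norms in the amplification factor, so the bookkeeping point you flag is handled identically in either route. What your modular version buys is a clean separation of the perturbation lemma from the margin lemma (reusable for other perturbation models); what the paper's interleaved recursion buys is that the score-to-margin step never passes through $\mathrm{Lip}(f^2)$, since the boundary point $\boldsymbol{x}^{\star}$ is baked into the recursion from the start. You are also right that the chief delicacy---applying Theorem~3.1 when the layer-$i$ input is an already-perturbed latent rather than a training latent---is present in both arguments; the paper simply invokes $\sqrt{C_{i2}}$ at that step without comment, whereas you name the issue and propose the natural fixes.
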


The detailed proof can be found in Appendix C. The bound predicts that when pruning multiple layers the GE will be much greater than the sum of the GEs for each individual pruning. We note also the generality of our result; even though we have assumed a specific form of pruning, the GE bound holds for any type of bounded perturbation to a hidden layer.

\section{Experiments}

We make a number of experiments to compare FeTa with LOBS and NetTrim-ADMM. All experiments were run on a MacBook Pro with CPU 2.8GHz Intel Core i7 and RAM 16GB 1600 MHz DDR3.
\subsection{Time Complexity}
First we compare the execution time of FeTa with that of LOBS and NetTrim-ADMM. We set $\Omega (\boldsymbol{U}) = ||\boldsymbol{U}||_1$ and aim for $95\%$ sparsity. We set $d_1$ to be the input dimensions, $d_2$ to be the output dimensions and $N$ to be the number of training samples. Assuming that each $g(\boldsymbol{U};\boldsymbol{x}_j)$ is $L$-Lipschitz smooth and $g(\boldsymbol{U})$ is $\mu$-strongly convex, if we optimise for an $\epsilon$ optimal solution and set $k = L/\mu$, $\text{FeTa}$ scales like $\mathcal{O}(K(N+\frac{Nk}{N+\sqrt{k}}) \text{log}(\frac{1}{\epsilon}) d_1 d_2)$. We obtain this by multiplying the number of outer iterations $K$ with the number of gradient evaluations required to reach an $\epsilon$ good solution in inner Algorithm 2, and finally multiplying with the gradient evaluation cost. Conversely LOBS scales like $\mathcal{O}((N+d_2)d_1^2)$ while NetTrim-ADMM scales like $\mathcal{O}(N d_1^3)$ due to the required Cholesky factorisation. This gives a computational advantage to our algorithm in settings where the input dimension is large. We validate this by constructing a toy dataset with $d_2 =10$ , $d_1 =\{2000:100:3000\}$ and $N =1000$. The samples $\boldsymbol{a} \in \mathbb{R}^{d1}$ and $\boldsymbol{b} \in \mathbb{R}^{d2}$ are generated with $i.i.d$ Gaussian entries. We plot in Figure 1 the results, which are in line with the theoretical predictions.


\subsection{Classification Accuracy}

\begin{figure}[t!]
\includegraphics[scale = 0.55]{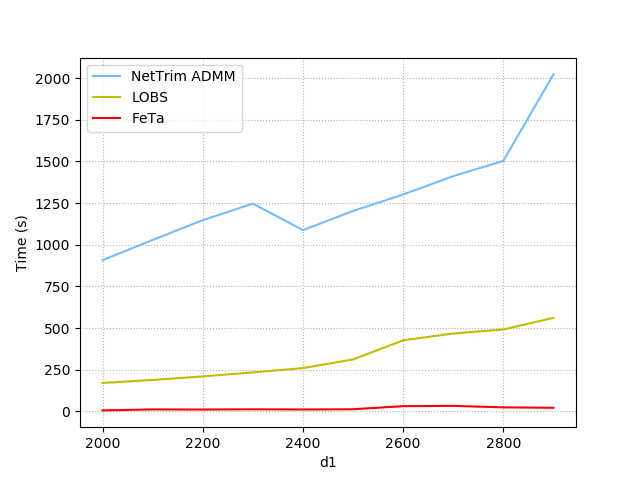}
\centering
\caption{\textbf{Time Complexity}: We plot the calculation time for FeTa, NetTrim and LOBS for the toy dataset. We see that the computation time is in line with theoretical predictions. FeTa scales roughly as $\mathcal{O}(Nd_2d_1)$ while NetTrim and LOBS scale like $\mathcal{O}(Nd_2^3)$ and $\mathcal{O}(Nd_2^2)$. As the size of the input dimensions increases FeTa becomes orders of magnitude faster than the competing approaches.}
\end{figure}

\subsubsection{Sparse Regularisation}
In this section we perform experiments on the proposed compression scheme with feedforward neural networks. We compare the original full-precision network (without compression) with the following compressed networks: (i) $\text{FeTa}$ with $\Omega (\boldsymbol{U}) = ||\boldsymbol{U}||_1$ (ii) Net-Trim (iii) LOBS (iv) Hard Thresholding. We refer to the respective papers for Net-Trim and LOBS. Hard Thresholding is defined as $F(\boldsymbol{x})=\boldsymbol{x} \odot I(|\boldsymbol{x}|>t)$, where $I$ is the elementwise indicator function, $\odot$ is the Hadamard product and $t$ is a positive constant. 

Experiments were performed on two commonly used datasets:
\begin{enumerate}
	\item \textit{MNIST}: This contains $28 \times 28$ gray images from ten digit classes. We use 55000 images for training, another 5000 for validation, and the remaining 10000 for testing. We use the LeNet-5 model:
	
	\begin{equation}
	\begin{split}
		&\text{Input} \rightarrow (1 \times 6C5)  \rightarrow MP2 \rightarrow (6 \times 16C5)  \\ 
		& \rightarrow MP2 \rightarrow 120FC \rightarrow 84FC \rightarrow 10SM \rightarrow \text{Output},	
	\end{split}
	\end{equation}
	where $C5$ is a $5 \times 5$ ReLU convolution layer, $MP2$ is a $2 \times 2$ max-pooling layer, $FC$ is a fully connected layer and $SM$ is a linear softmax layer. 

	\item \textit{CIFAR-10}:This contains 60000 $32 \times 32$ color images for ten object classes. We use 50000 images for training and the remaining 10000 for testing. The training data is augmented by random cropping to $24 \times 24$ pixels, random flips from left to right, contrast and brightness distortions to 200000 images. We use a smaller variant of the AlexNet model:
	
	\begin{equation}
	\begin{split}
		&\text{Input} \rightarrow (3 \times 64C5) \rightarrow MP2 \rightarrow (64 \times 64C5) \\
		& \rightarrow MP2 \rightarrow 384FC \rightarrow 192FC \rightarrow 10SM \rightarrow \text{Output}.
	\end{split}	
	\end{equation}

	
\end{enumerate}

We first prune \textbf{only the first} fully connected layer (the one furthest from the output) for clarity. Figure 2 shows the classification accuracy vs compression ratio for $\text{FeTa}$, $\text{NetTrim}$, LOBS and Hard Thresholding. We see that Hard Thresholding works adequately up to $85\%$ sparsity. From this level of sparsity and above the performance of Hard Thresholding degrades rapidly, FeTa has $\boldsymbol{10\%}$ higher accuracy on average while being the same or marginally worse than LOBS and NetTrim.



\begin{figure*}[t!]
\centering
\begin{subfigure}{.5\textwidth}
  \centering
  \includegraphics[scale=0.55]{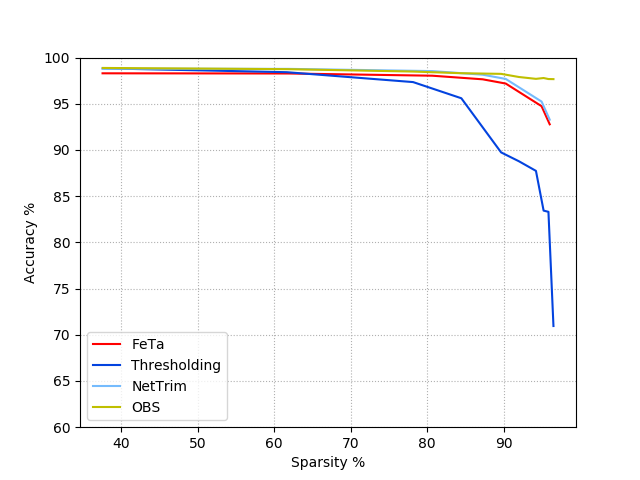}
  \caption{}
\end{subfigure}%
\begin{subfigure}{.5\textwidth}
  \centering
  \includegraphics[scale=0.55]{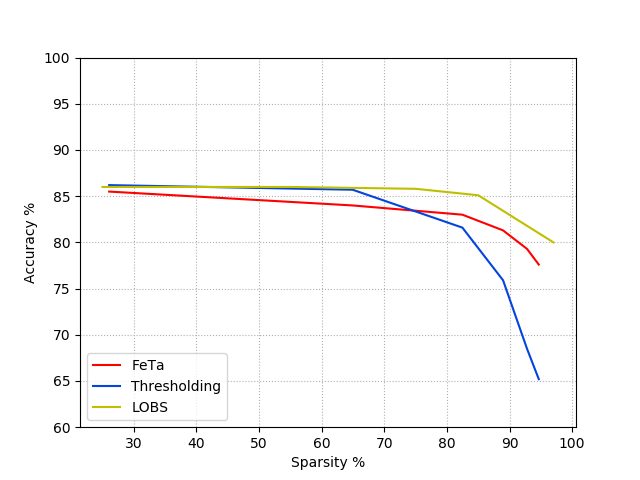}
  \caption{}
\end{subfigure}
\caption{\textbf{Accuracy vs Sparsity}: (a)We plot the classification accuracy of the pruned LeNet-5 architecture for different sparsity levels. Until the 80\% sparsity level roughly all methods are equal. For sparsity levels greater than 80\% FeTa clearly outperforms Hard Thresholding while remaining competitive with LOBS. (b)We plot the classification accuracy of the pruned CifarNet architecture for different sparsity levels. The results are consistent with the LeNet-5 experiment. }
\end{figure*}

For the task of pruning the first fully connected layer we also show detailed comparison results for all methods in Table 1. For the LeNet-5 model, FeTa achieves the same accuracy as Net-Trim while being $25 \times$ faster. This is expected as the two algorithms optimise a similar objective, while FeTa exploits the structure of the objective to achieve lower complexity in optimisation. Furthermore FeTa achieves marginally lower classification accuracy compared to LOBS while being $5 \times$ faster, and is significantly better than Thresholding. 

\begin{table}[h!]
\caption{Test accuracy rates (\%) prune only first fully connected layer.} \label{tab:title2} 
\label{sample-table}
\vskip 0.15in
\begin{center}
\begin{small}
\begin{sc}
\begin{tabular}{ lcccc  }
  \toprule
  LeNet-5 & Original & CR & Pruned & Time \\ 
  \midrule
  Net-Trim & 99.2\% & 95\% & 95\% & 455s \T\\		
  LOBS  & 99.2\% & 95\% & 97\% & 90s \\
  Threshold  & 99.2\% & 95\% & 83\% & - \\
  $\textbf{FeTa}$  & 99.2\% & 95\% & $\boldsymbol{95\%}$ & $\boldsymbol{18}$\textbf{s} \\
  \midrule
  CifarNet & Original & CR & Pruned & Time \\ 
  \midrule
  Net-Trim  & 86\% & - & - & -  \T\\
  LOBS  & 86\% & 90\% & 83.4\% & 3h 15min \\
  Threshold  & 86\% & 90\% & 73\% & - \\
  $\textbf{FeTa}$  & 86\% & 90\% & $\boldsymbol{80\%}$ & $\boldsymbol{20}$\textbf{min} \\
  \bottomrule
\end{tabular}
\end{sc}
\end{small}
\end{center}
\vskip -0.1in
\end{table}

\begin{table}[h!]
\captionof{table}{Test accuracy rates (\%) prune all fully connected layers.} \label{tab:title2} 
\label{sample-table}
\vskip 0.15in
\begin{center}
\begin{small}
\begin{sc}
\begin{tabular}{ lccccc  }
  \toprule
  LeNet-5 &  Original & CR & Pruned &  Time  \\ 
  \midrule
  Net-Trim & 99.2\% & 90\% & 95\% & 500s \\		
  LOBS  & 99.2\% & 90\% & 97\% & 97s \\
  Threshold  & 99.2\% & 90\% & 64\% & - \\
  $\textbf{FeTa}$ & 99.2\% & 90\% & $\boldsymbol{95\%}$ & $\boldsymbol{38}$\textbf{s} \\
  \midrule
  CifarNet & Original & CR & Pruned & Time \\ 
  \midrule
  Net-Trim & 86\% & - & - & -  \\
  LOBS  & 86\% & 90\% & 83.4\% & 3h 15min \\
  Threshold & 86\% & 90\% & 64\% & - \\
  $\textbf{FeTa}$ & 86\% & 90\% & $\boldsymbol{71\%}$ & $\boldsymbol{25}$\textbf{min}  \\
  \bottomrule
\end{tabular}
\end{sc}
\end{small}
\end{center}
\vskip -0.1in
\end{table}


For the CifarNet model we see in Table 1 that Net-Trim is not feasible on the machine used for the experiments as it requires over 16GB of RAM. Compared to LOBS FeTa again achieves marginally lower accuracy but is $8\times$ faster.

Next we prune both the fully connected layers in the two architectures to the same sparsity level and show the results in Table 2. We lower the achieved sparsity for all methods to $90\%$. For MNIST The accuracy results are the same as pruning a single layer, with FeTa achieving the same or marginally worse results while being $13\times$ faster than Net-Trim and $2.5\times$ faster than LOBS. For the Cifar experiment FeTa shows a bigger degradation in performance compared to LOBS while remaining $8\times$ faster. Thresholding achieves a notably bad result of $\boldsymbol{64\%}$ accuracy, which makes the method essentially inapplicable for multilayer pruning.

We note here that the degraded performance of FeTa for two layer pruning in Cifar is due to a poor solution for the second dense layer. By combining FeTa for the first dense layer and Thresholding for the second dense layer one can achieve $\boldsymbol{77\%}$ accuracy for the same computational cost.  Furthermore as mentioned in \citet{dong2017learning} and \citet{wolfe2017incredible} retraining can recover classification accuracy that was lost during pruning. Starting from a good pruning which doesn't allow for much degradation significantly reduces retraining time.

\subsubsection{Low Rank Regularisation}
As a proof of concept for the generality of our approach we apply our method while imposing low-rank regularisation on the learned matrix $\boldsymbol{U}$. For low rank $k$ we compare two methods (i) $\text{FeTa}$ with $\Omega (\boldsymbol{U}) = ||\boldsymbol{U}||_{\star}$ and optimised with Acc-Prox-SVRG and (ii) Hard Thresholding of singular values using the truncated SVD defined as $\boldsymbol{U} = \boldsymbol{N \Sigma} \boldsymbol{V}^{\star}, \; \boldsymbol{\Sigma} = \text{diag}(\{\sigma_i\}_{1 \leq i \leq k})$. We plot the results in Figure 3. 

\begin{figure}[h!]
\centering
\begin{subfigure}{.24\textwidth}
  \centering
  \includegraphics[scale=0.28]{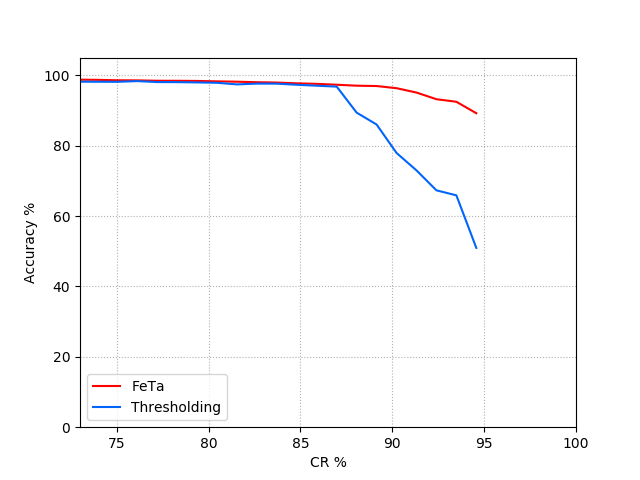}
  \caption{LeNet-5}
\end{subfigure}%
\begin{subfigure}{.24\textwidth}
  \centering
  \includegraphics[scale=0.28]{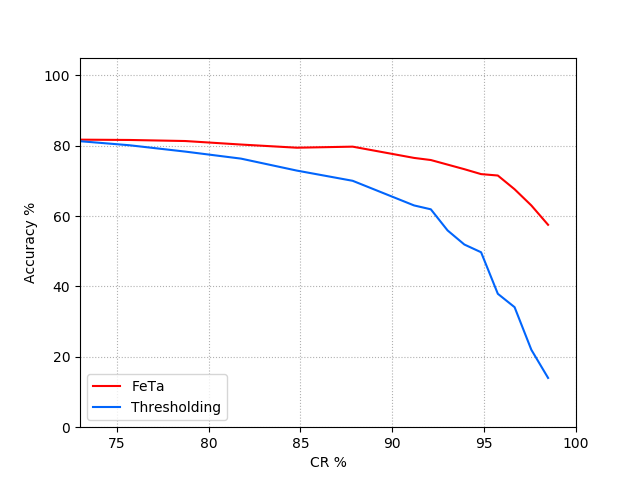}
  \caption{CifarNet}
\end{subfigure}
\caption{\textbf{Accuracy vs CR}: (a)We plot the classification accuracy of the low-rank compressed LeNet-5 architecture for different CR levels. Until the 85\% CR level roughly all methods are equal. For CR levels greater than 85\% FeTa clearly outperforms Hard Thresholding. (b)We plot the classification accuracy of the pruned CifarNet architecture for different CR levels. The results are consistent with the LeNet-5 experiment. } 
\end{figure}

In the above given $\boldsymbol{U} \in \mathbb{R}^{d_1 \times d_2}$ the Commpression Ratio (CR) is defined as $\text{CR} = (k*d_1 + k + k*d_2)/(d_1 * d_2)$. The results are in line with the $l_1$ regularisation, with significant degredation in classification accuracy for Hard Thresholding above $85\%$ CR.

\subsection{Generalization Error}

\begin{figure*}[t!]
\centering
\begin{subfigure}{.5\textwidth}
  \includegraphics[scale = 0.55]{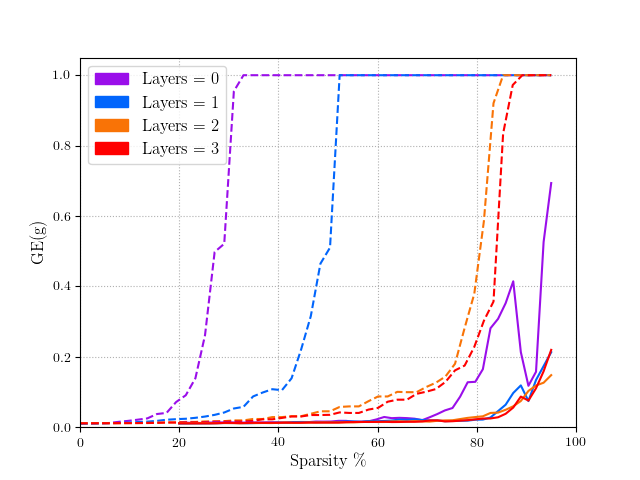}
  \centering
  \caption{Single Layer}
\end{subfigure}%
\begin{subfigure}{.5\textwidth}
  \includegraphics[scale = 0.55]{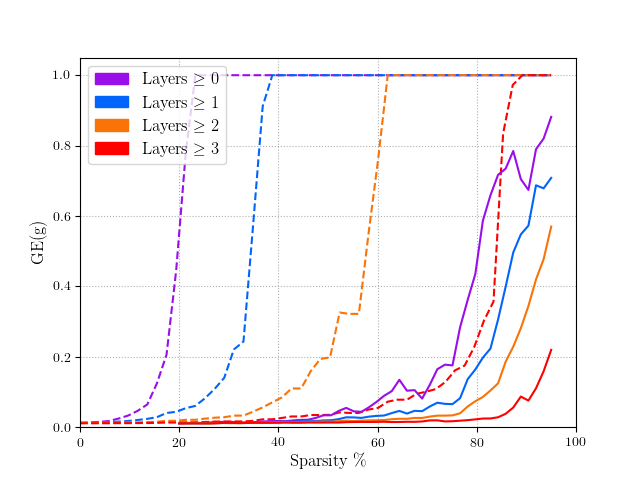}
  \centering
  \caption{Multiple Layers}
\end{subfigure}
\caption{\textbf{Layer Robustness}: We plot the theoretical prediction for the GE (dashed lines) and the empirical value of the GE (solid lines) for single layer pruning (a) and multilayer (b) pruning. Our theoretical predictions are tight for layers with small remaining depth but are loose for layers with big remaining depth. We first focus on pruning for $80\%$ sparsity. Layer $i=0$ is as predicted exponentially less robust compared to layers $i=\{1,2,3\}$. We then focus on pruning layer $i=0$ and layers $i\geq0$ for 80\% sparsity. We see that even though the GE errors for $i>0$ are negligible the GE error for $i\geq0$ is exponentially greater than the sum of the GEs when pruning $i=0$ and $i>0$. Interestingly in the empirical GE estimate there exists an artifact around 90\% sparsity which is partially captured by our prediction.   }
\end{figure*}

According to our theoretical analysis the GE drops exponentially with remaining layer depth. To corroborate this we train a LeNet-5 to high accuracy, then we pick a single layer and gradually increase its sparsity using Hard Thresholding. We find that the layers closer to the input are exponentially less robust to pruning, in line with our theoretical analysis. We plot the results in Figure 4.a. For some layers there is a sudden increase in accuracy around $90\%$ sparsity which could be due to the small size of the DNN. We point out that in empirical results \citet{raghu2016expressive} \citet{han2015learning} for much larger networks the degradation is entirely smooth.

Next we test our multilayer pruning bound. We prune to the same sparsity levels all layers in the sets $i \geq 0$ , $i \geq 1$ , $i \geq 2$ , $i \geq 3$. We plot the results in Figure 4.b. It is evident that the accuracy loss for layer groups is not simply the addition of the accuracy losses of the individual layers, but shows an exponential drop in accordance with our theoretical result.

We now aim to see how well our bound captures this exponential behaviour. We take two networks $g_a$ pruned at layer 3 and an unpruned network $g_b$ and make a number of simplifying assumptions. First we assume that in Theorem 3.3 $B=0$ such that $\text{GE}(g_{\star}) \leq A \cdot (\gamma-\frac{ \sum_{i=0}^L \sqrt{C_{i2} } \prod_{j=i+1}^L||\boldsymbol{W}_j||_2}{ \prod_i||\boldsymbol{W}_i||_2})^{-\frac{k}{2}}$. This is logical as $B$ includes only log terms. Assuming that the bounds are tight we now aim to calculate

\begin{equation}
\begin{split}
& \frac{\text{GE}(g_{a})}{\text{GE}(g_{b})} = \left( \frac{\gamma - \sum_{i=0}^L (\sqrt{C_{i2}^a } / \prod_{j=0}^i ||\boldsymbol{W}_j||_2)}{\gamma}\right)^{-\frac{k}{2}} \\
&= \left( \frac{o(\tilde{s})}{o(\tilde{s}) - \sum_{i=0}^L (\sqrt{C_{i2}^a }  \prod_{j=i+1}^N||\boldsymbol{W}_j||_2)}\right)^{\frac{k}{2}} \\ 
\end{split}
\end{equation}

We can use the above to make predictions for the GE of the pruned network by noting that $\text{GE}(g_{a}) = \text{GE}(g_{b})\left( o(\tilde{s}) / (o(\tilde{s}) - \sum_{i=0}^L (\sqrt{C_{i2}^a }  \prod_{j=i+1}^N||\boldsymbol{W}_j||_2))\right)^{\frac{k}{2}}$ as we know that $\text{GE}(g_{b}) \approx 0.01$ for the unpruned network and we have managed to avoid the cumbersome $A$ parameter. Next we make the assumption that $k \approx 20$. Dimensionality values $20-40$ are common for the MNIST dataset and result from a simple dimensionality analysis using PCA. We also deviate slightly from our theory by using the minimum layerwise error $\min_i[\sqrt{C_{i2}^a }]$ for each sparsity level, as well as the average scores $\mathbb{E}_{s \sim S}[o(\boldsymbol{x},g(\boldsymbol{x}))]$. We plot the theoretical predictions for single layer pruning in Figure 4.a and the theoretical predictions for multilayer pruning in Figure 4.b. We see that, while loose, the theoretical predictions correctly capture qualitatively the behaviour of the GE. Specifically, layers, as predicted, are exponentially less robust with remaining layer depth. Also , as predicted, when pruning multiple layers the resulting GE is exponentially greater than the sum of the individual GEs.

\section{Conclusion}
In this paper we have presented an efficient pruning algorithm for fully connected layers of DNNs, based on difference of convex functions optimisation. Our algorithm is orders of magnitude faster than competing approaches while allowing for a controlled increase in the GE. We provided a theoretical analysis of the increase in GE resulting from bounded perturbations to the hidden layer weights, of which pruning is a special case. This analysis correctly predicts the previously observed phenomenon that network layers closer to the input are exponentially less robust to pruning compared to layers close to the output. Experiments on common feedforward architectures validated our results.

\bibliography{icml2018_submission}
\bibliographystyle{icml2018}

\appendix

\onecolumn

\subsection*{A. Proof of theorem 3.1.}
We denote by $f^1(\cdot,\boldsymbol{W}^1)$ the original representation and by $f^2(\cdot,\boldsymbol{W}^2)$ the pruned representation.  We assume that, after training, $\forall s \in \mathcal{S}_m \: ||f^1(\boldsymbol{a},\boldsymbol{W}^1)-f^2(\boldsymbol{a},\boldsymbol{W}^2)||_2^2 \leq C_1$. Second, we assume that $\forall s \in \mathcal{S} \; \exists s_i \in \mathcal{S}_m \Rightarrow ||a-a_i||^2_2 \leq \epsilon $. Third the linear operators in $\boldsymbol{W}^1$ , $\boldsymbol{W}^2$ are frames with upper frame bounds $B_1$ , $B_2$ respectively.
The following two lemmas will be useful:

\begin{lemma}
The operator $f^1(\cdot,\boldsymbol{W}^1)$ is Lipschitz continuous with upper Lipschitz constant $B_1$.
\end{lemma}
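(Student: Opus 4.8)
The plan is to decompose the layer operator $f^1(\cdot,\boldsymbol{W}^1)$ into its linear part followed by the pointwise nonlinearity, bound the Lipschitz constant of each piece, and compose. Concretely, write $f^1(\boldsymbol{a},\boldsymbol{W}^1)=\rho(\boldsymbol{W}^{1T}\boldsymbol{a})$ where $\rho$ is the ReLU (or its smooth softplus surrogate), applied coordinatewise. I will interpret ``upper Lipschitz constant $B_1$'' in the squared $\ell_2$ sense, i.e. the claim to establish is $\|f^1(\boldsymbol{a},\boldsymbol{W}^1)-f^1(\boldsymbol{a}',\boldsymbol{W}^1)\|_2^2\le B_1\|\boldsymbol{a}-\boldsymbol{a}'\|_2^2$ for all $\boldsymbol{a},\boldsymbol{a}'$, which is the natural reading given that $B_1$ enters as an \emph{upper frame bound}.

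First I would invoke the frame hypothesis: since the linear operator in $\boldsymbol{W}^1$ is a frame with upper frame bound $B_1$, by definition $\|\boldsymbol{W}^{1T}\boldsymbol{v}\|_2^2\le B_1\|\boldsymbol{v}\|_2^2$ for every $\boldsymbol{v}$; applying this to $\boldsymbol{v}=\boldsymbol{a}-\boldsymbol{a}'$ controls the linear part. Next I would show the nonlinearity is non-expansive: for scalars, $|\rho(x)-\rho(y)|\le|x-y|$ because $\rho'$ exists a.e.\ and lies in $[0,1]$ (for the softplus $\rho(x)=\tfrac1\beta\log(1+e^{\beta x})$ one has $\rho'(x)=\tfrac{e^{\beta x}}{1+e^{\beta x}}\in(0,1)$, and for the hard ReLU the slope is $0$ or $1$), so summing the squared coordinatewise inequalities gives $\|\rho(\boldsymbol{u})-\rho(\boldsymbol{u}')\|_2^2\le\|\boldsymbol{u}-\boldsymbol{u}'\|_2^2$. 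Chaining the two bounds with $\boldsymbol{u}=\boldsymbol{W}^{1T}\boldsymbol{a}$, $\boldsymbol{u}'=\boldsymbol{W}^{1T}\boldsymbol{a}'$ yields
\[
\|f^1(\boldsymbol{a},\boldsymbol{W}^1)-f^1(\boldsymbol{a}',\boldsymbol{W}^1)\|_2^2\le\|\boldsymbol{W}^{1T}(\boldsymbol{a}-\boldsymbol{a}')\|_2^2\le B_1\|\boldsymbol{a}-\boldsymbol{a}'\|_2^2 ,
\]
which is the assertion. If the layer carries an additive bias term, it cancels in the difference and does not affect the constant.

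There is no real obstacle here; the statement is essentially a composition-of-Lipschitz-maps fact. The only points that deserve a sentence of care are (i) making explicit that ``Lipschitz constant $B_1$'' is meant in the squared-norm convention, so that it matches the frame-bound definition used elsewhere in Appendix A, and (ii) noting that the same argument applies verbatim to the smooth approximation of the rectifier, so the lemma is insensitive to that modelling choice. This lemma, together with its twin for $f^2$ (with bound $B_2$) and the triangle inequality over the train-to-test perturbation $\|\boldsymbol{a}-\boldsymbol{a}_i\|_2^2\le\epsilon$, is exactly what feeds into the proof of Theorem~3.1.
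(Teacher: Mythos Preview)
Your argument is correct. Note, however, that the paper does not actually prove this lemma: it simply refers the reader to Bruna and Mallat (2013) with the remark that ``the derivation is not entirely trivial due to the non-smoothness of the rectifier non-linearity.'' Your direct proof---coordinatewise $1$-Lipschitzness of $\rho$ followed by the upper frame bound on $\boldsymbol{W}^{1T}$---is precisely the argument the paper \emph{does} spell out for the companion Lemma on $f^2$ (the softplus case), so you are in effect filling in what the authors chose to outsource. The only distinction is that the authors appear to regard the hard ReLU as requiring the cited machinery, whereas your observation that $|\max(0,x)-\max(0,y)|\le|x-y|$ (or the a.e.\ derivative in $[0,1]$) handles the non-smooth case just as cleanly as the smooth one; nothing from the scattering-network paper is actually needed here.
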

\begin{proof}
See \citet{bruna2013signal} for details, the derivation is not entirely trivial due to the non-smoothness of the rectifier non-linearity.
\end{proof}

\begin{lemma}
The operator $f^2(\cdot,\boldsymbol{W}^2)$ is Lipschitz continuous with upper Lipschitz constant $B_2$.
\end{lemma}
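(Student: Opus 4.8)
The plan is to prove this exactly as the preceding lemma (the one bounding the Lipschitz constant of $f^1$), since the pruned layer $f^2(\cdot,\boldsymbol{W}^2)$ has the identical functional form $\boldsymbol{a}\mapsto\rho(\boldsymbol{W}^2\boldsymbol{a})$ as the original layer — only the weight matrix has been sparsified by Algorithm 1, and sparsification does not change the structure of the argument. I would write $f^2$ as the composition of the linear map associated with $\boldsymbol{W}^2$ with the coordinatewise rectifier $\rho$ (or its smooth surrogate $\rho(x)=\frac{1}{\beta}\log(1+\exp(\beta x))$ introduced in Section 2.1), and control each factor separately.

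First I would show that $\rho$ acts non-expansively. For scalars, $|\rho(u)-\rho(v)|\le|u-v|$: for the exact ReLU this is the elementary inequality $|\max(0,u)-\max(0,v)|\le|u-v|$ (equivalently, $\rho$ is $1$-Lipschitz since its weak derivative lies in $[0,1]$), and for the smooth surrogate it follows because $\rho'$ is the logistic sigmoid, which takes values in $(0,1)$. Applying this coordinatewise gives $\|\rho(\boldsymbol{u})-\rho(\boldsymbol{v})\|_2\le\|\boldsymbol{u}-\boldsymbol{v}\|_2$ for vectors. Second, I would invoke the standing assumption that the linear operator in $\boldsymbol{W}^2$ is a frame with upper frame bound $B_2$, which by definition means $\|\boldsymbol{W}^2\boldsymbol{z}\|_2^2\le B_2\|\boldsymbol{z}\|_2^2$ for all $\boldsymbol{z}$. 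Composing the two bounds then yields, for any $\boldsymbol{a},\boldsymbol{a}'$,
\begin{equation}
\|f^2(\boldsymbol{a},\boldsymbol{W}^2)-f^2(\boldsymbol{a}',\boldsymbol{W}^2)\|_2^2=\|\rho(\boldsymbol{W}^2\boldsymbol{a})-\rho(\boldsymbol{W}^2\boldsymbol{a}')\|_2^2\le\|\boldsymbol{W}^2(\boldsymbol{a}-\boldsymbol{a}')\|_2^2\le B_2\|\boldsymbol{a}-\boldsymbol{a}'\|_2^2,
\end{equation}
which is precisely the (squared) Lipschitz estimate with constant $B_2$ that is used in the proof of Theorem 3.1; the ordinary Lipschitz constant is then $\sqrt{B_2}$.

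The only genuinely delicate point — and the reason the preceding lemma defers to \citet{bruna2013signal} — is the non-smoothness of the exact rectifier: one has to be careful that the $1$-Lipschitz property survives at the kink $x=0$, and, if one prefers to work with the smooth surrogate, that replacing $\max(0,\cdot)$ by $\frac{1}{\beta}\log(1+\exp(\beta\cdot))$ does not inflate the constant (it does not, since the surrogate is also $1$-Lipschitz uniformly in $\beta$). I expect this to be the main obstacle, though it is still routine; everything else is the two-line composition argument above, and the identical reasoning applied to $\boldsymbol{W}^1$ with bound $B_1$ recovers the preceding lemma.
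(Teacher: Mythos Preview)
Your proposal is correct and matches the paper's own proof almost line for line: the paper computes $\rho'(x)=\frac{e^{\beta x}}{1+e^{\beta x}}\le 1$ for the smooth surrogate to get the $1$-Lipschitz property of the nonlinearity, then chains this with the frame bound $B_2$ exactly as you do. The only cosmetic difference is that the paper treats $f^1$ and $f^2$ asymmetrically---deferring Lemma~6.1 to \citet{bruna2013signal} because $f^1$ uses the non-smooth ReLU, while proving Lemma~6.2 directly because $f^2$ (the output of Algorithm~1) uses the smooth surrogate---whereas you handle both cases at once; your argument is at least as complete.
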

\begin{proof}
We see that: $\frac{d}{dx} \rho(x) = \frac{d}{dx} \frac{1}{\beta}\text{log}(1+\text{exp}(\beta x)) = \frac{\text{exp} (\beta x)}{1+\text{exp}(\beta x)} \leq 1$. Therefore the smooth approximation to the rectifier non-linarity is Lipschitz smooth with Lipschitz constant $k=1$. Then $ || f^2(x,\boldsymbol{W}^2)-f^2(y,\boldsymbol{W}^2)||_2^2 \leq k||\boldsymbol{W}^2x-\boldsymbol{W}^2y||_2^2 \leq ||\boldsymbol{W}^2(x-y)||_2^2 \leq B_2 ||x-y||_2^2$.
\end{proof}

We drop the $\boldsymbol{W}^i$ from the layer notation for clarity. Using the triangle inequality
\begin{equation}
\begin{split}
||f^1(\boldsymbol{a})-f^2(\boldsymbol{a})||_2^2 & = ||f^1(\boldsymbol{a})+f^1(\boldsymbol{a}_i)-f^1(\boldsymbol{a}_i)-f^2(\boldsymbol{a})||_2^2 \\
& \leq ||f^1(\boldsymbol{a})-f^1(\boldsymbol{a}_i)||_2^2+||f^1(\boldsymbol{a}_i)-f^2(\boldsymbol{a})||_2^2 \\
& = ||f^1(\boldsymbol{a})-f^1(\boldsymbol{a}_i)||_2^2+||f^1(\boldsymbol{a}_i)+f^2(\boldsymbol{a}_i)-f^2(\boldsymbol{a}_i)-f^2(\boldsymbol{a})||_2^2 \\
& \leq ||f^1(\boldsymbol{a})-f^1(\boldsymbol{a}_i)||_2^2+||f^1(\boldsymbol{a}_i)-f^2(\boldsymbol{a}_i)||_2^2+||f^2(\boldsymbol{a}_i)-f^2(\boldsymbol{a})||_2^2 \\
& \leq B_1||\boldsymbol{a}_i-\boldsymbol{a}||^2_2+C + B_2||\boldsymbol{a}_i-\boldsymbol{a}||^2_2 \\ 
& = C_1 + (B_1+B_2)||\boldsymbol{a}_i-\boldsymbol{a}||^2_2 \\
& \leq C_1 + (B_1+B_2)\epsilon, \\
\end{split}
\end{equation}
where we used Lemma 6.1 and Lemma 6.2 in line 5.

\subsection*{B. Proof of theorem 3.2.}
We will proceed as follows. We first introduce some prior results which hold for the general class of robust classifiers. We will then give specific prior generalization error results for the case of classifiers operating on datapoints from $C_m$-regular manifolds. Afterwards we will provide prior results for the specific case of DNN clasifiers. Finally we will prove our novel generalization error bound and provide a link with prior bounds.  

\vspace{0.3 cm}

We first formalize robustness for generic classifiers $g(\boldsymbol{x})$. In the following we assume a loss function $l(g(\boldsymbol{x}),y)$ that is positive and bounded $0 \leq l(g(\boldsymbol{x}),y) \leq M$.
\begin{definition}
An algorithm $g(\boldsymbol{x})$ is $(K,\epsilon (\mathcal{S}_m))$ robust if $\mathcal{S}$ can be partitioned into K disjoint sets, denoted by $\{T_t\}_{t=1}^K$, such that $\forall s_i \in \mathcal{S}_m$, $\forall s \in \mathcal{S}$,

\begin{equation}
s_i,s \in T_t, \Rightarrow |l(g(\boldsymbol{x}_i),y_i)-l(g(\boldsymbol{x}),y)| \leq \epsilon(\mathcal{S}_m).
\end{equation}
\end{definition}

Now let $\hat{l}(\cdot)$ and $l_{\text{emp}}(\cdot)$ denote the expected error and the training error, i.e,

\begin{equation}
\hat{l}(g) \triangleq \mathbb{E}_{s \sim S}l(g(\boldsymbol{x}),y); \; \; \; l_{\text{emp}}(g) \triangleq \frac{1}{m} \sum_{s_i \in \mathcal{S}_m}l(q(\boldsymbol{x}_i),y_i)
\end{equation}

we can then state the following theorem from \citet{xu2012robustness}:

\begin{theorem}
If $\mathcal{S}_m$ consists of $m$ i.i.d. samples, and $g(\boldsymbol{x})$ is $(K,\epsilon (\mathcal{S}_m))$-robust, then for any $\delta > 0$, with probability at least $1-\delta$,

\begin{equation}
GE(g)=|\hat{l}(g) - l_{\text{emp}}(g)| \leq \epsilon (\mathcal{S}_m) + M \sqrt{\frac{2K\text{ln}2+2\text{ln}(1/ \delta )}{m} }.
\end{equation}

\end{theorem}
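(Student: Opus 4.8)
The plan is to run the standard algorithmic-robustness argument: split both $\hat{l}(g)$ and $l_{\text{emp}}(g)$ over the partition $\{T_t\}_{t=1}^K$ and control the discrepancy cell by cell. Let $N_t$ be the number of training points $s_i \in \mathcal{S}_m$ that fall in $T_t$, so that $\sum_{t=1}^K N_t = m$; discard the cells with $\mu(T_t)=0$, which contribute nothing to either risk almost surely, and for the remaining cells set $\ell_t = \mathbb{E}_{s \sim S}[\,l(g(\boldsymbol{x}),y)\mid s \in T_t\,]$. Then $\hat{l}(g) = \sum_t \mu(T_t)\,\ell_t$ and $l_{\text{emp}}(g) = \frac{1}{m}\sum_t \sum_{s_i \in T_t} l(g(\boldsymbol{x}_i),y_i)$.

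Next I would introduce the intermediate quantity $\sum_t \frac{N_t}{m}\,\ell_t$ and apply the triangle inequality:
\[
GE(g) \;\le\; \Big|\sum_{t}\big(\mu(T_t)-\tfrac{N_t}{m}\big)\ell_t\Big| \;+\; \frac{1}{m}\sum_{t}\sum_{s_i \in T_t}\big|\ell_t - l(g(\boldsymbol{x}_i),y_i)\big|.
\]
For the second (within-cell) sum, fix $s_i \in T_t \cap \mathcal{S}_m$; the $(K,\epsilon(\mathcal{S}_m))$-robustness hypothesis gives $|l(g(\boldsymbol{x}_i),y_i)-l(g(\boldsymbol{x}),y)|\le \epsilon(\mathcal{S}_m)$ for every $s\in T_t$, and averaging this over $s\sim S$ conditioned on $s\in T_t$ yields $|\ell_t - l(g(\boldsymbol{x}_i),y_i)|\le \epsilon(\mathcal{S}_m)$; summing over the $N_t$ points in each cell bounds this term by $\frac{1}{m}\sum_t N_t\,\epsilon(\mathcal{S}_m)=\epsilon(\mathcal{S}_m)$. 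For the first term, $0 \le l \le M$ forces $0\le \ell_t\le M$, so it is at most $M\sum_t |\mu(T_t)-\frac{N_t}{m}|$, i.e.\ $M$ times the $\ell_1$ distance between the empirical cell frequencies and their means.

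The main and only nontrivial step is a deviation bound for that $\ell_1$ distance. The vector $(N_1,\dots,N_K)$ is $\mathrm{Multinomial}(m;\mu(T_1),\dots,\mu(T_K))$-distributed, and I would invoke the Bretagnolle--Huber--Carol inequality, $\Pr\big[\sum_t|\tfrac{N_t}{m}-\mu(T_t)|\ge \lambda\big]\le 2^K e^{-m\lambda^2/2}$ (itself provable by a bounded-differences argument applied to $(s_1,\dots,s_m)\mapsto \sum_t|\tfrac{N_t}{m}-\mu(T_t)|$, but cleanest to cite). Setting $2^K e^{-m\lambda^2/2}=\delta$ gives $\lambda=\sqrt{(2K\ln 2+2\ln(1/\delta))/m}$, so with probability at least $1-\delta$ the first term is at most $M\sqrt{(2K\ln 2+2\ln(1/\delta))/m}$, and adding the two pieces proves the claim. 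The subtleties to watch are the zero-probability cells (handled above) and the fact that $\epsilon(\mathcal{S}_m)$ is itself data-dependent, which is harmless because the concentration step uses only the randomness of the cell assignments; getting the exact stated constants is entirely a matter of the multinomial concentration inequality.
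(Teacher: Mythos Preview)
Your argument is correct and is precisely the standard robustness-to-generalization proof: decompose over the partition, bound the within-cell oscillation by $\epsilon(\mathcal{S}_m)$ via the robustness hypothesis, and control the between-cell term by $M$ times the $\ell_1$ deviation of the empirical cell frequencies, using the Bretagnolle--Huber--Carol multinomial concentration inequality to get the stated constants.

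As for comparison with the paper: the paper does not give its own proof of this statement. It is quoted verbatim as a prior result from \citet{xu2012robustness} and used as a black box en route to Corollary~3.1.1 and the pruning bounds (Theorems~3.2 and~3.3). Your write-up is essentially the proof that appears in that reference, so there is no methodological divergence to discuss.
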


The above generic bound can be specified for the case of $C_m$-regular manifolds as in \citet{sokolic2017robust}. We recall the definition of the sample margin $\gamma(s_i)$ as well as the following theorem:

\begin{theorem}
If there exists $\gamma$ such that

\begin{equation}
\gamma(s_i) > \gamma > 0 \; \forall s_i \in S_m,
\end{equation}
then the classifier $g(\boldsymbol{x})$ is $(N_{\mathcal{Y}} \cdot \mathcal{N}(\mathcal{X};d,\gamma / 2 ),0 )$-robust.
\end{theorem}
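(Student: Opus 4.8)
The plan is to exhibit an explicit partition of the sample space $\mathcal{S} = \mathcal{X} \times \mathcal{Y}$ into $K = N_{\mathcal{Y}} \cdot \mathcal{N}(\mathcal{X};d,\gamma/2)$ disjoint cells and then to check that on each cell the robustness inequality holds with slack $\epsilon(\mathcal{S}_m) = 0$, which is exactly the asserted $(K,0)$-robustness.

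First I would cover the input space: writing $N_c = \mathcal{N}(\mathcal{X};d,\gamma/2)$, pick $d$-balls $B_1,\dots,B_{N_c}$ of radius $\gamma/2$ whose union contains $\mathcal{X}$, which exist by definition of the covering number. These balls generally overlap, so I would disjointify them greedily, setting $C_1 = B_1 \cap \mathcal{X}$ and $C_j = (B_j \cap \mathcal{X}) \setminus \bigcup_{l<j} B_l$ for $j \geq 2$; the $C_j$ are pairwise disjoint, cover $\mathcal{X}$, and each has $d$-diameter at most $\gamma$. Refining by the labels, set $T_{j,c} = C_j \times \{c\}$ for $j \in \{1,\dots,N_c\}$ and $c \in \{1,\dots,N_{\mathcal{Y}}\}$; the resulting $N_c \cdot N_{\mathcal{Y}} = K$ sets partition $\mathcal{S}$.

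Next I would verify the robustness condition on this partition. Fix a training point $s_i = (\boldsymbol{x}_i,y_i) \in \mathcal{S}_m$ and any point $s = (\boldsymbol{x},y) \in \mathcal{S}$ with $s_i, s \in T_{j,c}$. Then $y = c = y_i$, and $\boldsymbol{x},\boldsymbol{x}_i \in C_j$ gives $\|\boldsymbol{x}-\boldsymbol{x}_i\|_2 \leq d(\boldsymbol{x},\boldsymbol{x}_i) \leq \gamma < \gamma(s_i)$, where the first inequality uses that the covering metric $d$ upper-bounds Euclidean distance (with equality when $d$ is itself the $\ell_2$ metric) and the last is the hypothesis $\gamma(s_i) > \gamma$. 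By Definition 3.2, the strict bound $\|\boldsymbol{x}-\boldsymbol{x}_i\|_2 < \gamma(s_i)$ forces $g(\boldsymbol{x}) = g(\boldsymbol{x}_i)$. Since the loss value $l(g(\boldsymbol{x}),y)$ depends on $s$ only through the pair $(g(\boldsymbol{x}),y)$, and both coordinates now coincide with those of $s_i$, we obtain $l(g(\boldsymbol{x}),y) = l(g(\boldsymbol{x}_i),y_i)$, hence $|l(g(\boldsymbol{x}_i),y_i) - l(g(\boldsymbol{x}),y)| = 0$. This is precisely the $(K,\epsilon(\mathcal{S}_m))$-robustness condition stated above with $\epsilon(\mathcal{S}_m)=0$ and $K = N_{\mathcal{Y}} \cdot \mathcal{N}(\mathcal{X};d,\gamma/2)$.

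The argument is essentially bookkeeping, and the only steps that need care — the mild obstacles — are converting an overlapping cover into an honest partition without inflating any cell's diameter beyond $\gamma$, and keeping the chain $\|\boldsymbol{x}-\boldsymbol{x}_i\|_2 \leq \gamma < \gamma(s_i)$ \emph{strict} on the right so that it can be fed into Definition 3.2, whose guaranteeing metric ball is open; this is exactly why the covering radius is taken to be $\gamma/2$ rather than $\gamma$. One should also confirm that the metric $d$ in the covering number is compatible with the $\ell_2$ metric used to define the margin: if $d$ is a geodesic metric on the manifold $\mathcal{X}$ it dominates the ambient $\ell_2$ distance, so the displayed inequalities are unaffected.
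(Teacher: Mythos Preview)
Your argument is correct and is precisely the standard covering-plus-margin construction that underlies this statement. Note, however, that the paper does not actually supply its own proof here: the theorem is quoted in Appendix~B as a prior result from \citet{sokolic2017robust} (introduced with ``We recall the definition of the sample margin $\gamma(s_i)$ as well as the following theorem''), and is then used as a black box en route to Corollary~3.1.1. So there is nothing in the paper to compare against beyond the citation; your write-up is a faithful reconstruction of what that cited proof looks like, including the two points that matter---disjointifying the cover without blowing the cell diameter past~$\gamma$, and keeping the strict inequality $\|\boldsymbol{x}-\boldsymbol{x}_i\|_2 \leq \gamma < \gamma(s_i)$ so that Definition~3.2 applies.
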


By direct substitution of the above result and the definiton of a $C_m$-regular manifold into Theorem 6.3 we get: 

\begin{corollary}
Assume that $\mathcal{X}$ is a (subset of) $C_M$ regular $k-$dimensional manifold, where $\mathcal{N}(\mathcal{X};d,\rho)\leq(\frac{C_M}{\rho})^k$. Assume also that classifier $g(\boldsymbol{x})$ achieves a classification margin $\gamma$ and take $l(g(\boldsymbol{x}),y)$ to be the $0-1$ loss. Then for any $\delta > 0$, with probability at least $1-\delta$,
\begin{equation}
GE(g) \leq \sqrt{\frac{\text{log}(2)\cdot N_{\mathcal{Y}} \cdot 2^{k+1} \cdot (C_M)^k }{\gamma^k m} }+\sqrt{\frac{2 \text{log}(1/\delta) }{m} }.
\end{equation}

\end{corollary}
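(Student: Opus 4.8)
The plan is to chain the three ingredients just assembled — the margin-based robustness guarantee (Theorem 6.4), the robustness-to-generalization bound (Theorem 6.3 from \citet{xu2012robustness}), and the covering-number estimate for a $C_M$-regular manifold — by direct substitution. There is no genuinely hard step: the entire content is carried by the two cited theorems, and what remains is bookkeeping of constants.

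First I would invoke Theorem 6.4. Since $g(\boldsymbol{x})$ achieves margin $\gamma$, i.e. $\gamma(s_i) > \gamma > 0$ for all $s_i \in S_m$, Theorem 6.4 guarantees that $g$ is $(K,\epsilon(\mathcal{S}_m))$-robust with partition count $K = N_{\mathcal{Y}} \cdot \mathcal{N}(\mathcal{X};d,\gamma/2)$ and, crucially, with $\epsilon(\mathcal{S}_m) = 0$. Next I would use the $C_M$-regularity of $\mathcal{X}$ evaluated at radius $\rho = \gamma/2$ to bound the covering number,
\[
\mathcal{N}(\mathcal{X};d,\gamma/2) \leq \left(\frac{2 C_M}{\gamma}\right)^k = \frac{2^k (C_M)^k}{\gamma^k},
\]
so that $K \leq N_{\mathcal{Y}} \cdot 2^k (C_M)^k / \gamma^k$.

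Third I would substitute into Theorem 6.3. For the $0$--$1$ loss the bound $M$ equals $1$, and since $\epsilon(\mathcal{S}_m)=0$, Theorem 6.3 collapses to $GE(g) \leq \sqrt{(2K\log 2 + 2\log(1/\delta))/m}$. The one non-mechanical step is to separate this single radical into the two additive terms appearing in the statement; I would apply subadditivity of the square root, $\sqrt{a+b} \leq \sqrt{a} + \sqrt{b}$, with $a = 2K\log 2 / m$ and $b = 2\log(1/\delta)/m$. Inserting the bound on $K$ into the first term produces the factor $2 \cdot 2^k = 2^{k+1}$, yielding exactly $\sqrt{\log(2)\cdot N_{\mathcal{Y}} \cdot 2^{k+1}(C_M)^k / (\gamma^k m)}$, while the second term is already in the desired form $\sqrt{2\log(1/\delta)/m}$.

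I expect no real obstacle. The only place a reader might pause is the subadditivity split, which is where the bound is (slightly) loosened relative to the tighter single-radical form; this is a deliberate, standard relaxation chosen to present the result as the clean sum of a margin-dependent term and a confidence term.
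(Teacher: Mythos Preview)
Your proposal is correct and follows essentially the same approach as the paper, which simply states that the corollary follows ``by direct substitution'' of Theorem 6.4 and the $C_M$-regular manifold definition into Theorem 6.3. You have merely made the bookkeeping explicit, including the subadditivity split and the $M=1$ bound for the $0$--$1$ loss, both of which the paper leaves implicit.
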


Note that in the above we have used the fact that $l(g(\boldsymbol{x}),y) \leq 1$ and therefore $M=1$. The above holds for a wide range of algorithms that includes as an example SVMs. We are now ready to specify the above bound for the case of DNNs, adapted from \citet{sokolic2017robust},

\begin{theorem}
Assume that a DNN classifier $g(\boldsymbol{x})$, as defined in equation 8, and let $\tilde{\boldsymbol{x}}$ be the training sample with the smallest score $o(\tilde{s})>0$. Then the classification margin is bounded as
\begin{equation}
\gamma(s_i)\geq \frac{o(\tilde{s})}{\prod_i ||\boldsymbol{W}_i||_2 } = \gamma.
\end{equation}
\end{theorem}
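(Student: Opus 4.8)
The statement is the DNN margin bound adapted from \citet{sokolic2017robust}. The plan is in three steps: (i) show that $f$ is globally Lipschitz continuous with constant $\prod_i ||\boldsymbol{W}_i||_2$; (ii) observe that the score $o(s_i)$ is, up to the normalisation fixed in its definition, the Euclidean distance from $f(\boldsymbol{x}_i)$ to the nearest relevant decision hyperplane in the output space $\mathbb{R}^{N_y}$; (iii) combine the two to transport a separation in output space back to a ball in input space centred at $\boldsymbol{x}_i$.

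For step (i) I would proceed layer by layer. Each hidden layer has the form $f_l(\boldsymbol{z},\boldsymbol{W}_l)=\rho(\boldsymbol{W}_l\boldsymbol{z})$ with $\rho$ the smooth rectifier; as in the proof of Lemma 6.2, $\rho$ has derivative in $[0,1]$ and is therefore $1$-Lipschitz componentwise, so $||f_l(\boldsymbol{z},\boldsymbol{W}_l)-f_l(\boldsymbol{z}',\boldsymbol{W}_l)||_2\le||\boldsymbol{W}_l(\boldsymbol{z}-\boldsymbol{z}')||_2\le||\boldsymbol{W}_l||_2\,||\boldsymbol{z}-\boldsymbol{z}'||_2$; the same estimate holds for any max-pooling stage (which is $1$-Lipschitz in the $\ell_2$ norm) and for the final linear softmax layer (contributing $||\boldsymbol{W}_L||_2$). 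Composing these inequalities along all $L$ layers yields
\begin{equation}
||f(\boldsymbol{x})-f(\boldsymbol{x}')||_2 \le \Big(\prod_i ||\boldsymbol{W}_i||_2\Big)\,||\boldsymbol{x}-\boldsymbol{x}'||_2 \qquad \forall\, \boldsymbol{x},\boldsymbol{x}'\in\mathcal{X}.
\end{equation}

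For steps (ii)--(iii), fix a training sample $s_i$ and put $c=g(\boldsymbol{x}_i)$. Let $\boldsymbol{x}$ satisfy $||\boldsymbol{x}-\boldsymbol{x}_i||_2<\gamma=o(\tilde{s})/\prod_i||\boldsymbol{W}_i||_2$. For each $j\neq c$ I would decompose $\sqrt{2}(\delta_c-\delta_j)^T f(\boldsymbol{x})=\sqrt{2}(\delta_c-\delta_j)^T f(\boldsymbol{x}_i)+\sqrt{2}(\delta_c-\delta_j)^T\big(f(\boldsymbol{x})-f(\boldsymbol{x}_i)\big)$, bound the first term from below by $o(s_i)\ge o(\tilde{s})$ using the definition of the score together with the hypothesis $o(\tilde{s})<o(s_i)$, and bound the second term in absolute value by Cauchy--Schwarz and the Lipschitz estimate above, using $||\delta_c-\delta_j||_2=\sqrt{2}$; the normalisation appearing in the definition of the score is precisely what makes this perturbation term stay strictly below $o(\tilde{s})$ on the ball $||\boldsymbol{x}-\boldsymbol{x}_i||_2<\gamma$. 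Consequently $(\delta_c-\delta_j)^T f(\boldsymbol{x})>0$, i.e. $(f(\boldsymbol{x}))_c>(f(\boldsymbol{x}))_j$, for every $j\neq c$, so $g(\boldsymbol{x})=c=g(\boldsymbol{x}_i)$; the definition of the training sample margin then gives $\gamma(s_i)\ge\gamma$.

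The argument is a chaining of Lipschitz estimates and one application of Cauchy--Schwarz, so I do not anticipate a genuine obstacle; the two points that need care are making sure the per-layer $1$-Lipschitz claim really covers every non-linear and pooling stage occurring in equation 8 (the non-smoothness of the rectifier is exactly why Lemmas 6.1--6.2 are not completely trivial), and tracking the $\sqrt{2}$ normalisation carefully so that it cancels against $||\delta_c-\delta_j||_2=\sqrt{2}$ and leaves the clean constant $o(\tilde{s})/\prod_i ||\boldsymbol{W}_i||_2$.
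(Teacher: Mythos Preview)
The paper does not supply its own proof of this statement: it is quoted in Appendix~B as a prior result ``adapted from \citet{sokolic2017robust}'' and then used as a black box in the proof of Theorem~3.2. Your three-step plan --- layerwise $1$-Lipschitz estimate giving the global constant $\prod_i\|\boldsymbol{W}_i\|_2$, Cauchy--Schwarz on $(\delta_c-\delta_j)^T(f(\boldsymbol{x})-f(\boldsymbol{x}_i))$, and reading off the margin from the definition --- is precisely the argument in that reference, so the proposal is correct and matches the intended proof.

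One remark on the point you already flagged as needing care: with the score written as $o(s_i)=\min_{j}\sqrt{2}(\delta_c-\delta_j)^T f(\boldsymbol{x}_i)$, the vector $v_{cj}=\sqrt{2}(\delta_c-\delta_j)$ has $\|v_{cj}\|_2=2$, not $1$, so the $\sqrt{2}$ does \emph{not} cancel against $\|\delta_c-\delta_j\|_2=\sqrt{2}$ and your computation would produce $o(\tilde s)/(2\prod_i\|\boldsymbol{W}_i\|_2)$. This is an inconsistency in the paper itself --- Appendix~B, display (26), line~3 implicitly uses $\|v_{cj}\|_2=1$ --- rather than a flaw in your argument; in the original \citet{sokolic2017robust} the normalisation is $1/\sqrt{2}$, which makes $\|v_{cj}\|_2=1$ and recovers the stated constant.
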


We now prove our main result. We will denote by $ \tilde{\boldsymbol{x}} = \text{arg min}_{s_i \in S_m}\text{min}_{j \neq g(\boldsymbol{x}_i)} \boldsymbol{v}^{T}_{g(\boldsymbol{x}_i) j} f(\boldsymbol{x}_i) $ the training sample with the smallest score. For this training sample we will denote $j^{\star} = \text{arg min}_{j \neq g(\tilde{\boldsymbol{x}})} \boldsymbol{v}^{T}_{g(\tilde{\boldsymbol{x}}) j} f(\tilde{\boldsymbol{x}})$ the second best guess of the classifier $g(\cdot)$. Throughout the proof, we will use the notation $\boldsymbol{v}_{ij}=\sqrt{2}(\boldsymbol{\delta}_i-\boldsymbol{\delta}_j)$. 

\vspace{0.3 cm}

First we assume the score $o_1(\tilde{\boldsymbol{x}},g_1(\tilde{\boldsymbol{x}}))$ of the point $\tilde{\boldsymbol{x}}$ for the original classifier $g_1(\boldsymbol{x})$. Then, for the second classifier $g_2(\boldsymbol{x})$, we take a point $\boldsymbol{x}^{\star}$ that lies on the decision boundary between $g_2(\tilde{\boldsymbol{x}})$ and $j^{\star}$ such that $o_2(\boldsymbol{x}^{\star},g_2(\tilde{\boldsymbol{x}}))=0$. We assume for simplicity that, after pruning, the classification decisions do not change such that $g_1(\tilde{\boldsymbol{x}}) = g_2(\tilde{\boldsymbol{x}})$. We then make the following calculations

\begin{equation}
\begin{split}
o_1(\tilde{\boldsymbol{x}},g_1(\tilde{\boldsymbol{x}})) & = o_1(\tilde{\boldsymbol{x}},g_1(\tilde{\boldsymbol{x}})) - o_2(\boldsymbol{x}^{\star},g_2(\tilde{\boldsymbol{x}})) = \boldsymbol{v}^{T}_{g_1(\tilde{\boldsymbol{x}}) j^{\star}}f^1(\tilde{\boldsymbol{x}})-\boldsymbol{v}^{T}_{g_2(\tilde{\boldsymbol{x}}) j^{\star}}f^2(\boldsymbol{x}^{\star}) \\
& = \boldsymbol{v}^{T}_{g_2(\tilde{\boldsymbol{x}}) j^{\star}}(f^1(\tilde{\boldsymbol{x}})-f^2(\boldsymbol{x}^{\star})) \\
& \leq ||\boldsymbol{v}^{T}_{g_2(\tilde{\boldsymbol{x}}) j^{\star}}||_2||f^1(\tilde{\boldsymbol{x}})-f^2(\boldsymbol{x}^{\star})||_2 = ||f^1_L(\tilde{\boldsymbol{x}})-f^2_L(\boldsymbol{x}^{\star})||_2 \\
& \leq \prod_{i>i^{\star}} ||\boldsymbol{W}_i||_2||f^1_{i^{\star}}(\tilde{\boldsymbol{x}})-f^2_{i^{\star}}(\boldsymbol{x}^{\star})||_2 \\
& \leq \prod_{i>i^{\star}} ||\boldsymbol{W}_i||_2 \{ ||f^1_{i^{\star}}(\tilde{\boldsymbol{x}})-f^1_{i^{\star}}(\boldsymbol{x}^{\star})||_2 + ||f^1_{i^{\star}}(\boldsymbol{x}^{\star})-f^2_{i^{\star}}(\boldsymbol{x}^{\star})||_2 \} \\
& \leq \prod_{i>i^{\star}} ||\boldsymbol{W}_i||_2 \{ ||f^1_{i^{\star}}(\tilde{\boldsymbol{x}})-f^1_{i^{\star}}(\boldsymbol{x}^{\star})||_2 + \sqrt{C_2} \} \\
& \leq \prod_{i} ||\boldsymbol{W}_i||_2  ||\tilde{\boldsymbol{x}}-\boldsymbol{x}^{\star}||_2 + \prod_{i>i^{\star}} ||\boldsymbol{W}_i||_2 \sqrt{C_2} \\
& \leq \prod_{i} ||\boldsymbol{W}_i||_2  \gamma_2(s_i) + \prod_{i>i^{\star}} ||\boldsymbol{W}_i||_2 \sqrt{C_2}, \\
\end{split}
\end{equation}

where we used Theorem 3.1 in line 5, since $\boldsymbol{x}^{\star}$ is not a training sample. From the above we can therefore write

\begin{equation}
\frac{o_1(\tilde{\boldsymbol{x}},g_1(\tilde{\boldsymbol{x}}))-\sqrt{C_2}\prod_{i>i^{\star}} ||\boldsymbol{W}_i||_2}{\prod_{i} ||\boldsymbol{W}_i||_2} \leq  \gamma_2(\tilde{\boldsymbol{x}}).
\end{equation}

By following the derivation of the margin from the original paper \citet{sokolic2017robust} and taking into account the definition of the margin we know that

\begin{equation}
\gamma = \frac{o_1(\tilde{\boldsymbol{x}},g_1(\tilde{\boldsymbol{x}}))}{ \prod_{i} ||\boldsymbol{W}_i||_2} \leq \gamma_1(\tilde{\boldsymbol{x}}).
\end{equation}

Therefore we can finally write

\begin{equation}
\gamma - \frac{\sqrt{C_2}\prod_{i>i^{\star}} ||\boldsymbol{W}_i||_2}{\prod_{i} ||\boldsymbol{W}_i||_2} \leq  \gamma_2(\tilde{\boldsymbol{x}}).
\end{equation}

The theorem follows from direct application of Corollary 3.1.1. 
Note that if $\gamma - \frac{\sqrt{C_2}\prod_{i>i^{\star}} ||\boldsymbol{W}_i||_2}{\prod_{i} ||\boldsymbol{W}_i||_2} < 0$ the derived bound becomes vacuous, as by definition $0 \leq \gamma_2(\tilde{\boldsymbol{x}})$.

\subsection*{C. Proof of theorem 3.3.}
We start as in theorem 3.2 by assuming the score $o_1(\tilde{\boldsymbol{x}},g_1(\tilde{\boldsymbol{x}}))$ of the point $\tilde{\boldsymbol{x}}$ for the original classifier $g_1(\boldsymbol{x})$. Then, for the second classifier $g_2(\boldsymbol{x})$, we take a point $\boldsymbol{x}^{\star}$ that lies on the decision boundary between $g_2(\tilde{\boldsymbol{x}})$ and $j^{\star}$ such that $o_2(\boldsymbol{x}^{\star},g_2(\tilde{\boldsymbol{x}}))=0$. We assume as before that the classification decisions do not change such that $g_1(\tilde{\boldsymbol{x}}) = g_2(\tilde{\boldsymbol{x}})$. We write

\begin{equation}
\begin{split}
o_1(\tilde{\boldsymbol{x}},g_1(\tilde{\boldsymbol{x}})) & = o_1(\tilde{\boldsymbol{x}},g_1(\tilde{\boldsymbol{x}})) - o_2(\boldsymbol{x}^{\star},g_2(\tilde{\boldsymbol{x}})) = \boldsymbol{v}^{T}_{g_1(\tilde{\boldsymbol{x}}) j^{\star}}f^1(\tilde{\boldsymbol{x}})-\boldsymbol{v}^{T}_{g_2(\tilde{\boldsymbol{x}}) j^{\star}}f^2(\boldsymbol{x}^{\star}) \\
& = \boldsymbol{v}^{T}_{g_2(\tilde{\boldsymbol{x}}) j^{\star}}(f^1(\tilde{\boldsymbol{x}})-f^2(\boldsymbol{x}^{\star})) \\
& \leq ||\boldsymbol{v}^{T}_{g_2(\tilde{\boldsymbol{x}}) j^{\star}}||_2||f^1(\tilde{\boldsymbol{x}})-f^2(\boldsymbol{x}^{\star})||_2 = ||f^1_L(\tilde{\boldsymbol{x}})-f^2_L(\boldsymbol{x}^{\star})||_2 \\
& \leq ||f^1_{L}(\tilde{\boldsymbol{x}})-f^1_{L}(\boldsymbol{x}^{\star})||_2 + ||f^1_{L}(\boldsymbol{x}^{\star})-f^2_{L}(\boldsymbol{x}^{\star})||_2  \\
& \leq ||f^1_{L}(\tilde{\boldsymbol{x}})-f^1_{L}(\boldsymbol{x}^{\star})||_2 + \sqrt{C_{L2}}  \\
& \leq ||\boldsymbol{W}_L ||_2 ||f^1_{L-1}(\tilde{\boldsymbol{x}})-f^2_{L-1}(\boldsymbol{x}^{\star})||_2 + \sqrt{C_{L2}}  \\
& \leq ||\boldsymbol{W}_L ||_2  \{ ||f^1_{L-1}(\tilde{\boldsymbol{x}})-f^1_{L-1}(\boldsymbol{x}^{\star})||_2 + ||f^1_{L-1}(\boldsymbol{x}^{\star})-f^2_{L-1}(\boldsymbol{x}^{\star})||_2 \} + \sqrt{C_{L2}}  \\
& \leq ||\boldsymbol{W}_L ||_2  \{ ||f^1_{L-1}(\tilde{\boldsymbol{x}})-f^1_{L-1}(\boldsymbol{x}^{\star})||_2 + \sqrt{C_{L-1,2}} \} + \sqrt{C_{L2}}  \\
& \leq ||\boldsymbol{W}_L ||_2   ||f^1_{L-1}(\tilde{\boldsymbol{x}})-f^1_{L-1}(\boldsymbol{x}^{\star})||_2 + ||\boldsymbol{W}_L ||_2 \sqrt{C_{L-1,2}}  + \sqrt{C_{L2}}  \\
& \leq ... \\
& \leq \prod_{i} ||\boldsymbol{W}_i||_2  ||\tilde{\boldsymbol{x}}-\boldsymbol{x}^{\star}||_2 + \sum_{i=0}^L \sqrt{C_{i2}} \prod_{j=i+1}^{L} ||\boldsymbol{W}_j||_2 \\
& \leq \prod_{i} ||\boldsymbol{W}_i||_2  \gamma_2(s_i) + \sum_{i=0}^L \sqrt{C_{i2}} \prod_{j=i+1}^{L} ||\boldsymbol{W}_j||_2. \\
\end{split}
\end{equation}

We can then write

\begin{equation}
\frac{o_1(\tilde{\boldsymbol{x}},g_1(\tilde{\boldsymbol{x}}))- \sum_{i=0}^L \sqrt{C_{i2} } \prod_{j=i+1}^L||\boldsymbol{W}_j||_2}{ \prod_i||\boldsymbol{W}_i||_2} \leq  \gamma_2(\tilde{\boldsymbol{x}}).
\end{equation}

Then as before

\begin{equation}
\gamma - \frac{\sum_{i=0}^L \sqrt{C_{i2} } \prod_{j=i+1}^L||\boldsymbol{W}_j||_2}{ \prod_i||\boldsymbol{W}_i||_2} \leq  \gamma_2(\tilde{\boldsymbol{x}}).
\end{equation}

The theorem follows from direct application of Corollary 3.1.1.

\end{document}